\DeclareRobustCommand\onedot{\futurelet\@let@token\@onedot}
\def\@onedot{\ifx\@let@token.\else.\null\fi\xspace}
\def\eg{e.g\onedot}
\def\ie{i.e\onedot}
\def\cf{cf\onedot}
\newcommand{\method}{Identity\xspace}
\newcommand{\bbox}{BB\xspace}
\definecolor{ourblue}{rgb}{0.368,0.507,0.71}
\definecolor{ourgreen}{rgb}{0.56,0.692,0.195}
\definecolor{ourred}{rgb}{0.923,0.386,0.209}
\newcommand{\ourlegend}{
{\small {\color{ourred}\rule[.5ex]{2em}{1.5pt}} \bbox \qquad {\color{ourgreen} \rule[.5ex]{2em}{1.5pt}} \method }}
\newcommand{\R}{\mathbb{R}}
\newcommand{\N}{\mathbb{N}}
\newcommand{\E}{\mathbb{E}}
\renewcommand{\d}{\mathrm{d}}
\DeclareMathOperator*{\argmax}{arg\,max}
\DeclareMathOperator*{\argmin}{arg\,min}
\DeclareMathOperator{\rank}{\mathbf{rk}}
\DeclareMathOperator{\rak}{\mathit{r@K}}
\DeclareMathOperator{\rel}{\mathrm{rel}}
\newcommand{\w}{\omega}
\newcommand{\y}{y}
\newcommand{\yp}{\y(\w)}
\newcommand{\yd}{\y^*}
\newcommand{\wde}{\w^*}
\newcommand{\costMap}{P}
\newcommand{\im}{\operatorname{Im}}
\newcommand{\dwI}{\Delta^{\!\text{I}}\w}
\newcommand{\dwBB}{\Delta^{\!\text{\bbox}}\w}
\newcommand{\Iparallel}{1}
\newcommand{\Iperp}{0}
\newcommand{\costSpace}{W}
\newcommand{\solSpace}{Y}
\newcommand{\betterSol}[1]{\solSpace^*({#1})}
\newcommand{\bbetterSol}[1]{\solSpace^*\bigl({#1}\bigr)}
\newcommand{\@dfrac}[3]{%
\if#1f\frac{\d#2}{\d#3}\fi
\if#1t\tfrac{\d#2}{\d#3}\fi
\if#1i\d#2/\d#3\fi}
\newcommand{\dy}[1][i]{\@dfrac{#1}{\ell}{y}}
\newcommand{\dw}[1][i]{\@dfrac{#1}{\ell}{\w}}
\newcommand{\pascal}{Pascal\xspace}
\newcommand{\voc}{Pascal VOC\xspace}
\newcommand{\vocxii}{Pascal VOC 2012\xspace}
\newcommand{\spair}{SPair-71k\xspace}
\newcommand{\cub}{CUB-200-2011\xspace}
\newtheorem{theorem}{Theorem}
\newtheorem{prop}{Proposition}
\setlist{
  topsep=0pt,
  leftmargin=2em 
  }
\newcommand{\fig}[1]{Fig.~\ref{#1}}
\newcommand{\figs}[2]{Figures \ref{#1} and \ref{#2}}
\newcommand{\Fig}[1]{Figure~\ref{#1}}
\newcommand{\secref}[1]{Sec.~\ref{#1}} %
\newcommand{\Tab}[1]{Table~\ref{#1}}
\newcommand{\tab}[1]{Tab.~\ref{#1}}
\newcommand{\Prop}[1]{Proposition~\ref{#1}}
\newcommand{\supp}[1]{Suppl.~\ref{#1}}
\newlength{\oldintextsep}
\title{Backpropagation through Combinatorial\\ Algorithms: Identity with Projection Works}
\newcommand{\mpi}{MPI Intelligent Systems\\T\"ubingen, Germany}
\author{%
  Subham Sekhar Sahoo\thanks{These authors contributed equally} \\
  Cornell University\\
  Ithaca, USA \\
  \textsl{ssahoo@cs.cornell.edu } \\
  \And
  Anselm Paulus$^*$ \\
  \mpi \\
  \textsl{anselm.paulus@tue.mpg.de} \\
  \And
  Marin Vlastelica \\
  \mpi \\
  \textsl{marin.vlastelica@tue.mpg.de} \\
  \And
  V\'it Musil \\
  Masaryk University, FI \\
  Brno, Czech Republic \\
  \textsl{musil@fi.muni.cz} \\
  \And
  Volodymyr Kuleshov \\
  Cornell Tech \\
  NYC, USA \\
  \textsl{kuleshov@cornell.edu} \\
  \And
  Georg Martius \\
  \mpi \\
  \textsl{georg.martius@tue.mpg.de}
}
\begin{document}
\maketitle

\begin{abstract}
Embedding discrete solvers as differentiable layers has given modern deep learning architectures combinatorial expressivity and discrete reasoning capabilities.
The derivative of these solvers is zero or undefined, therefore a meaningful replacement is crucial for effective gradient-based learning. 
Prior works rely on smoothing the solver with input perturbations, relaxing the solver to continuous problems, or interpolating the loss landscape with techniques that typically require additional solver calls, introduce extra hyper-parameters, or compromise performance.
We propose a principled approach to exploit the geometry of the discrete solution space to treat the solver as a negative identity on the backward pass and further provide a theoretical justification.
Our experiments demonstrate that such a straightforward hyper-parameter-free approach is able to compete with previous more complex methods on numerous experiments such as backpropagation through discrete samplers, deep graph matching, and image retrieval.
Furthermore, we substitute the previously proposed problem-specific and label-dependent margin with a generic regularization procedure that prevents cost collapse and increases robustness.
Code is available at 
\looseness=-1

\vspace{0.3ex}
\centerline{\href{https://github.com/martius-lab/solver-differentiation-identity}{github.com/martius-lab/solver-differentiation-identity}.}
\end{abstract}

\section{Introduction}

Deep neural networks have achieved astonishing results in solving problems on raw inputs.
However, in key domains such as planning or reasoning, deep networks need to make discrete decisions, which can be naturally formulated via constrained combinatorial optimization problems.
In many settings---including shortest path finding~\citep{VlastelicaEtal2020, berthet2020learning}, optimizing rank-based objective functions \citep{rolinek2020cvpr}, keypoint matching \citep{rolinek2020deep, paulus2021:comboptnet}, Sudoku solving \citep{amos2017optnet, wang2019satnet}, solving the knapsack problem from sentence descriptions \citep{paulus2021:comboptnet}---neural models that embed optimization modules as part of their layers achieve improved performance, data-efficiency, and generalization \citep{amos2017optnet,ferber2020mipaal,VlastelicaEtal2020,VlastelicaEtal2021:NeuroAlgorithmic}.

This paper explores the end-to-end training of deep neural network models with embedded discrete combinatorial algorithms (\emph{solvers}, for short) and derives simple and efficient gradient estimators for these architectures.
Deriving an informative gradient through the solver constitutes the main challenge, since the true gradient is, due to the discreteness, zero almost everywhere.
Most notably, Blackbox Backpropagation (\bbox) by \citet{VlastelicaEtal2020} introduces a simple method that yields an informative gradient by applying an informed perturbation to the solver input and calling the solver one additional time.
This results in a gradient of an implicit piecewise-linear loss interpolation, whose locality is controlled by a hyperparameter.

We propose a fundamentally different strategy by dropping the constraints on the solver solutions and simply propagating the incoming gradient through the solver, effectively treating the discrete block as a negative identity on the backward pass.
While our gradient replacement is simple and cheap to compute, it comes with important considerations, as its na\"ive application can result in unstable learning behavior, as described in the following.

Our considerations are focused on invariances of typical combinatorial problems under specific transformations of the cost vector. These transformations usually manifest as \emph{projections} or \emph{normalizations}, \eg as an immediate consequence of the linearity of the objective, the combinatorial solver is agnostic to normalization of the cost vector. Such invariances, if unattended, can hinder fast convergence due to the noise of spurious irrelevant updates,
or can result in divergence and cost collapse \citep{rolinek2020cvpr}. We propose to exploit the knowledge of such invariances by including the respective transformations in the computation graph. On the forward pass this leaves the solution unchanged, but on the backward pass removes the malicious part of the update. We also provide an intuitive view on this as differentiating through a relaxation of the solver. In our experiments, we show that this technique is crucial to the success of our proposed method. 

In addition, we improve the robustness of our method by adding noise to the cost vector, which induces a margin on the learned solutions and thereby subsumes previously proposed ground-truth-informed margins \citep{rolinek2020cvpr}.
With these considerations taken into account, our simple method achieves strong empirical performance. 
Moreover, it avoids a costly call to the solver on the backward pass and does not introduce additional hyperparameters in contrast to previous methods. 
  
Our contributions can be summarized as follows:
\begin{enumerate}[label=(\roman*)]
    \item A hyperparameter-free method for linear-cost solver differentiation that does not require any additional calls to the solver on the backward pass.
    \item Exploiting invariances via cost projections tailored to the combinatorial problem. 
    \item Increasing robustness and preventing cost collapse by replacing the previously proposed informed margin with a noise perturbation. 
    \item Analysis of the robustness of differentiation methods to perturbations during training.
\end{enumerate}

\begin{figure*}\vspace{-1em}
    \centering
    \includegraphics[width=.9\linewidth]{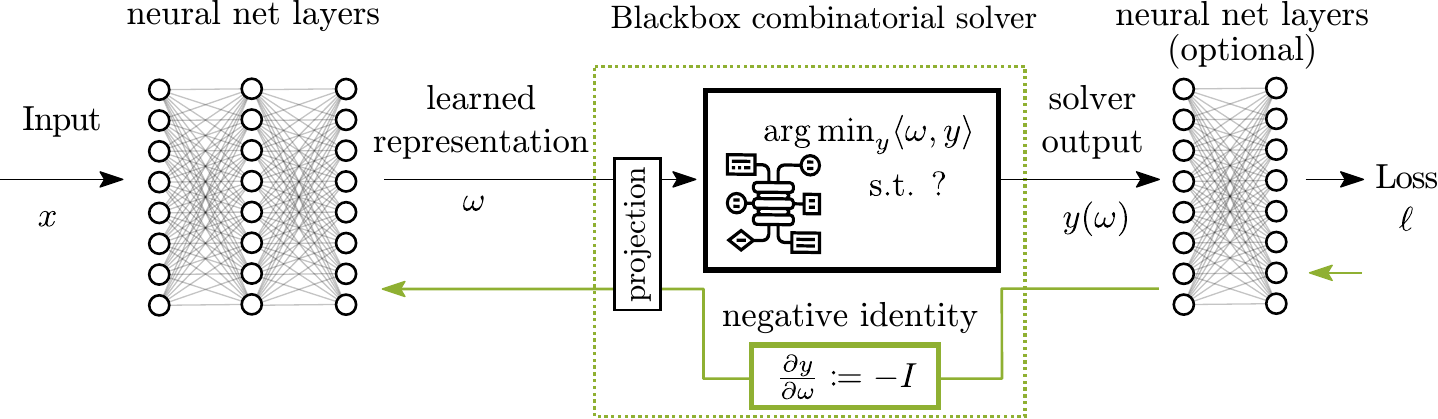}\vspace{-.5em}
    \caption{Hybrid architecture with blackbox combinatorial solver and \method module (green dotted line) with the projection of a cost~$\w$ and negative identity on the backward pass.}
    \label{fig:architecture}
		\vspace{-0.5em}
\end{figure*}

\section{Related Work}
\vskip 0ex minus .3em

\paragraph{Optimizers as Model Building Blocks.} It has been shown in various application domains that optimization on prediction is beneficial for model performance and generalization.
One such area is meta-learning, where methods backpropagate through multiple steps of gradient descent for few-shot adaptation in a multi-task setting \citep{finn2017model, raghu2019rapid}.
Along these lines, algorithms that effectively embed more general optimizers into differentiable architectures have been proposed such as convex optimization \citep{agrawal2019differentiable, lee2019meta}, quadratic programs \citep{amos2017optnet}, conic optimization layers \citep{agrawal2019differentiating}, and more.

\paragraph{Combinatorial Solver Differentiation.} Many important problems require discrete decisions and hence, using \emph{combinatorial} solvers as layers have sparked research interest \citep{domke2012generic, predict-optimize}. Methods, such as SPO \citep{predict-optimize} and MIPaaL \cite{ferber2020mipaal}, assume access to true target costs, a scenario we are not considering. 
\citet{berthet2020learning} differentiate through discrete solvers by sample-based smoothing.
Blackbox Backpropagation (\bbox) by~\citet{VlastelicaEtal2020} returns the gradient of an implicitly constructed piecewise-linear interpolation. 
Modified instances of this approach have been applied in various settings such as ranking \citep{rolinek2020cvpr}, keypoint matching \citep{rolinek2020deep}, and imitation learning \citep{VlastelicaEtal2020}.
I-MLE by~\citet{niepert2021implicit} adds noise to the solver to model a discrete probability distribution and uses the \bbox update to compute informative gradients.
Previous works have also considered adding a regularization to the linear program, including differentiable top-$k$-selection \citep{amos2019limited} and differentiable ranking and sorting \citep{blondel2020fast}.
Another common approach is to differentiate a softened solver, for instance in \citep{Wilder2019:melding} or \citep{wang2019satnet} for \textsc{MaxSat}.
Finally, \citet{paulus2021:comboptnet} extend approaches for learning the cost coefficients to learning also the constraints of integer linear programs.

\paragraph{Learning to Solve Combinatorial Problems.} 
An orthogonal line of work to ours is differentiable learning of combinatorial algorithms or their improvement by data-driven methods.
Examples of such algorithms include learning branching strategies for MIPs \citep{balcan2018learning, khalil2016learning, alvarez2017machine}, learning to solve SMT formulas \citep{balunovic2018learning}, and learning to solve linear programs \citep{MandiGuns2020:InteriorPointPO,tan2020learning}.
A natural way of dealing with the lack of gradient information in combinatorial problems is reinforcement learning which is prevalent among these methods \citep{bello2016neural, khalil2016learning, nazari2018reinforcement, zhang2000solving}.
Further progress has been made in applying graph neural networks for learning classical programming algorithms \citep{velivckovic2017graph, velivckovic2019neural} and latent value iteration \citep{deac2020xlvin}. 
Further work in this direction can be found in the review \citet{Li2021:OverviewLBO}. 
Another interesting related area of work is program synthesis, or ``learning to program'' \citep{ellis2018learning, inala2019synthesizing}.

\paragraph{Straight-through Estimator (STE).}
The STE \citep{bengio2013estimating, hinton2012coursera, rosenblatt1958perceptron} differentiates the thresholding function by treating it as an identity function. \cite{jang2016categorical} extended the STE to differentiating through samples from parametrized discrete distributions. The term STE has also been used more loosely to describe the replacement of any computational block with an identity function \citep{bengio2013estimating} or other differentiable replacements \citep{yin2019understanding} on the backward pass. Our proposed method can be seen as a generalization of differentiating through the thresholding function to general $\argmax$/$\argmin$ problems with linear objectives, and it is therefore closely connected to the STE. We give a detailed discussion of the relationship in \supp{app:sec:straight-through}.

\section{Method}
\vskip 0ex minus .3em

We consider architectures that contain differentiable blocks, such as neural network layers, and combinatorial blocks, as sketched in \fig{fig:architecture}. 
In this work, a combinatorial block uses an algorithm (called \emph{solver}) to solve an optimization problem of the form 
\begin{equation} \label{eqn:solver}
	\yp = \argmin_{y\in Y} \langle \w, y\rangle,
\end{equation}
where $\w\in\costSpace\subseteq\mathbb{R}^n$ is the cost vector produced by a previous block, $Y\subset\R^n$ is \emph{any finite} set of possible solutions and $\yp \in Y$ is the solver's output. 
Without loss of generality, $Y$ consists only of extremal points of its convex hull, as no other point can be a solution of optimization~\eqref{eqn:solver}. This formulation covers linear programs as well as integer linear programs.

\subsection{Differentiating through Combinatorial Solvers}
\vskip 0ex minus .3em

We consider the case in which the solver is embedded inside the neural network, meaning that the costs $\w$ are predicted by a backbone, the solver is called, and \emph{the solution $\yp$ is post-processed} before the final loss $\ell$ is computed.
For instance, this is the case when a specific choice of loss is crucial, or the solver is followed by additional learnable components.

We aim to train the entire architecture end-to-end, which requires computing gradients in a layer-wise manner during backpropagation.
However, the true derivative of the solver $\yp$ is \emph{either zero or undefined}, as the relation between the optimal solution $\yp$ and the cost vector $\w$ is piecewise constant.
Thus, it is crucial to contrive a \emph{meaningful replacement} for the true zero Jacobian of the combinatorial block.
 See \fig{fig:architecture} for an illustration.

Note, that the linearity of the objective in \eqref{eqn:solver} makes differentiating the optimization problem more challenging than the non-linear counterparts considered in \cite{amos2017optnet, agrawal2019differentiable}, due to the inherent discreteness of $\solSpace$. See \supp{sec:nonlinear-case} for a discussion.

\goodbreak
\subsection{Identity Update: Intuition in Simple Case}
\label{sec:simple-case}

On the backward pass, the negated incoming gradient $-\dy$ gives us the local information of where we expect solver solutions with a lower loss.
We first consider the simple scenario in which $-\dy$ points directly toward another solution $\yd$, referred to as the ``target''.
This means that there is some $\eta>0$ such that $-\eta\dy[t]=\yd-\yp$.
This happens, for instance, if the final layer coincides with the $\ell_2$ loss (then $\eta=1/2$), or for $\ell_1$ loss with $\solSpace$ being a subset of $\{0,1\}^n$ (then $\eta=1$).

Our aim is to update $\w$ in a manner which decreases the objective value associated with $\yd$, \ie $\langle \w, \yd \rangle$, and increases the objective value associated with the current solution $\yp$, \ie $\langle \w, \yp \rangle$. 
Therefore, $\yd$ will be favoured over $\yp$ as the solution in the updated optimization problem.
This motivates us to set the replacement for the true zero gradient $\dw$ to
\begin{equation}
	\label{eqn:new_method_continuous}
		 \tfrac{\d}{\d \w} \langle \w, \yd - \yp\rangle\\
		 = \yd - \yp \\
		 = - \eta\dy[t].
\end{equation}
The scaling factor $\eta$ is subsumed into the learning rate, therefore we propose the update
\begin{equation} \label{eqn:identity-update}
    \dwI = - \dy[t].
\end{equation}
This corresponds to simply treating the solver as a negated identity on the backward pass, hence we call our method ``\method''.
An illustration of how the repeated application of this update leads to the correct solution is provided in \figs{fig:intuitive-updates-a}{fig:intuitive-updates-b}.

\begin{figure}
    \centering
    \begin{subfigure}[t]{.32\linewidth}
    \includegraphics[width=\linewidth]{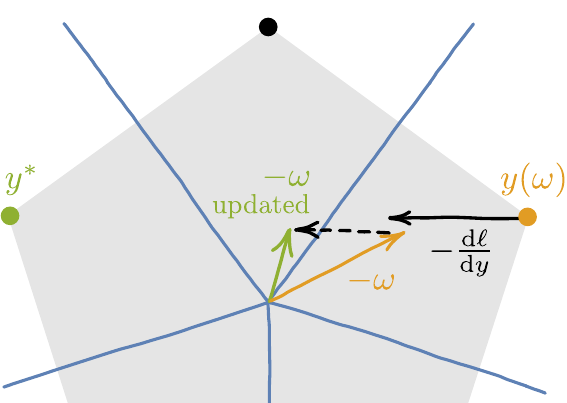}
    \caption{Id update to neighboring solution}
    \label{fig:intuitive-updates-a}
    \end{subfigure}\hfill
    \begin{subfigure}[t]{.32\linewidth}
    \includegraphics[width=\linewidth]{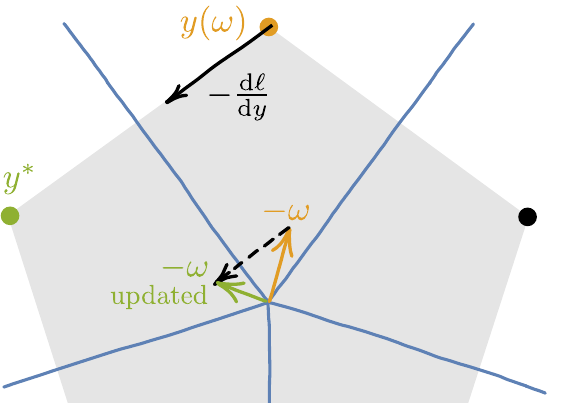}
    \caption{Id update to final solution}
    \label{fig:intuitive-updates-b}
    \end{subfigure}\hfill
    \begin{subfigure}[t]{.32\linewidth}
    \includegraphics[width=\linewidth]{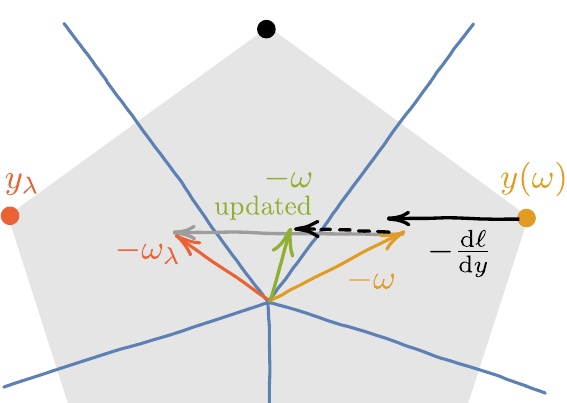}
    \caption{\bbox update (identical for some~$\lambda$)}
    \label{fig:intuitive-updates-c}
    \end{subfigure}
    \caption{Intuitive illustration of the \method (Id) gradient and its equivalence to Blackbox Backpropagation (\bbox) when $-\dy$ points directly to a target $\yd$.
    The cost and solution spaces are overlayed; the cost space partitions resulting in the same solution are drawn~in~blue. Note that the drawn updates to $\w$ are only of illustrative nature, as the updates are typically applied to the weights of a backbone.}
    \label{fig:intuitive-updates}
		\vspace{-0.5em}
\end{figure}

\paragraph{Comparison to Blackbox Backpropagation.}

To strengthen the intuition of why update~\eqref{eqn:identity-update} results in a sensible update, we offer a comparison to \bbox by \citet{VlastelicaEtal2020} that proposes an update
\begin{equation} \label{eqn:blackbox-backprop-update}
	\dwBB = \frac{1}{\lambda} \bigl( \y_\lambda(\w) - \yp \bigr),
\end{equation}
where $\yp$ is the solution from the forward pass and, on the backward pass, the solver is called again on a perturbed cost to get $\y_\lambda (\w) = \y(\w + \lambda \dy)$.
Here, $\lambda>0$ is a hyperparameter that controls the locality of the implicit linear interpolation of the loss landscape.

Observe that $\y_\lambda (\w)$ serves as a target for our current $\yp$, similar to the role of $\yd$ in computation~\eqref{eqn:new_method_continuous}. 
If $\lambda$ in \bbox coincides with a steps size that \method needs to reach the target $\yd$ with update $\dwI$, then
\begin{equation}
	\dwBB
    = \frac{1}{\lambda} \Bigl( \y\bigl(\w + \lambda \dy[t]\bigr) - \yp \Bigr)\\
    = \frac{1}{\lambda} \bigl( \yd - \yp \bigr)\\
    = -\frac{\eta}{\lambda} \dy[f]
		= \frac{\eta}{\lambda} \dwI.
\end{equation}
Therefore, if $-\dy$ points directly toward a neighboring solution, the \method update is equivalent to the \bbox update with the smallest $\lambda$ that results in a non-zero gradient in \eqref{eqn:blackbox-backprop-update}.
This situation is illustrated in \fig{fig:intuitive-updates-c} (\cf \fig{fig:intuitive-updates-a}).
However, \method does not require an additional call to the solver on the backward pass, nor does it have an additional hyperparameter that needs to be tuned.

\subsection{Identity Update: General Case}\label{sec:general_case}

We will now consider the general case, in which we do not expect the solver to find a better solution in the direction $\dwI=-\dy$ due to the constraints on $\solSpace$.
We show that if we ignore the constraints on $\solSpace$ and simply apply the \method method, we still achieve a sensible update.

Let $\w=\w_0$ be a fixed initial cost. For a fixed step size $\alpha>0$, we iteratively update the cost using \method, \ie we set $\w_{k+1}=\w_{k}-\alpha\dwI_k$
for $k\in\N$.
We aim to show that performing these updates leads to a solution with a lower loss.
As gradient-based methods cannot distinguish between a nonlinear loss $\ell$ and its linearization $f(\y)=\ell(\yp)+\langle\y-\yp,\dy\rangle$ at the point $\yp$, we can safely work with $f$ in our considerations.
We desire to find solutions with lower linearized loss than our current solution $\yp$, \ie points in the set
\begin{equation}
\label{eq:target-space}
    \bbetterSol{\yp}
        = \bigl\{\y \in \solSpace : f(\y) < f\bigl(\yp\bigr) \bigr\}. 
\end{equation}
Our result guarantees that a solution with a lower linearized loss is always found if one exists.
The proof is in~\supp{sec:proofs}.

\begin{theorem} \label{thm:general-case}
For sufficiently small $\alpha>0$,
either $\bbetterSol{\y(\w)}$ is empty and $\y(\w_k)=\y(\w)$ for every $k\in\N$, or there is $n\in\N$ such that $\y(\w_n)\in \bbetterSol{\y(\w)}$ and $\y(\w_k)=\y(\w)$ for all $k<n$.
\end{theorem}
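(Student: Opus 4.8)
The plan is to recognize the iteration as a straight-line walk in cost space. Following the text, replace $\ell$ by its linearization $f$ at $\yp$; then the incoming gradient seen by \method is the constant vector $g:=\dy$ evaluated at $\yp$, so $\dwI_k=-g$ for every $k$ and the iterates are simply $\w_k=\w+k\alpha g$. Write $\yd:=\yp$. Two facts are immediate: $\yd$ minimizes $\langle\w,\cdot\rangle$ over $\solSpace$, \ie $\langle\w,y\rangle\ge\langle\w,\yd\rangle$ for all $y\in\solSpace$; and $y\in\bbetterSol{\yd}$ iff $f(y)<f(\yd)$ iff $\langle g,y\rangle<\langle g,\yd\rangle$. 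Hence, for each $y\in\solSpace$, the quantity $h_y(k):=\langle\w_k,y-\yd\rangle=\langle\w,y-\yd\rangle+k\alpha\langle g,y-\yd\rangle$ is affine in $k$, nonnegative at $k=0$, with slope $\alpha\langle g,y-\yd\rangle$ that is negative exactly when $y\in\bbetterSol{\yd}$.

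The first step is to read off the structure of the walk from the functions $h_y$, using that $\yd$ is a minimizer of $\langle\w_k,\cdot\rangle$ iff $h_y(k)\ge0$ for all $y$. If $\bbetterSol{\yd}=\emptyset$, every $h_y$ has nonnegative slope, so $\yd$ stays a minimizer for all $k$; taking $\yd$ to be the (generically in $\w$ unique) minimizer at $\w$, the strict inequalities $h_y(0)>0$ for $y\ne\yd$ persist along the ray, so $\y(\w_k)=\yd$ for all $k$ --- the first alternative. Otherwise choose $\bar y\in\bbetterSol{\yd}$; then $h_{\bar y}(k)\to-\infty$, so $\yd$ fails to be a minimizer for all large $k$, and (using finiteness of $\solSpace$) there is a smallest index $n$ with $\y(\w_n)\ne\yd$; by minimality $\y(\w_k)=\yd$ for all $k<n$.

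It remains to show the jump lands in the target set, \ie $y_n:=\y(\w_n)\in\bbetterSol{\yd}$. I would use optimality at two consecutive iterates: $\yd=\y(\w_{n-1})$ is a minimizer at $\w_{n-1}$, so $\langle\w_{n-1},y_n-\yd\rangle\ge0$; and $y_n$ is a minimizer at $\w_n=\w_{n-1}+\alpha g$, so $\langle\w_{n-1},y_n-\yd\rangle+\alpha\langle g,y_n-\yd\rangle\le0$. Subtracting gives $\alpha\langle g,y_n-\yd\rangle\le-\langle\w_{n-1},y_n-\yd\rangle\le0$, hence $\langle g,y_n-\yd\rangle\le0$; equality would force $\langle\w_{n-1},y_n-\yd\rangle=0$ and then, expanding $\w_{n-1}=\w+(n-1)\alpha g$, also $\langle\w,y_n-\yd\rangle=0$, contradicting uniqueness of $\yd$ as the minimizer at $\w$ (recall $y_n\ne\yd$). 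Therefore $\langle g,y_n\rangle<\langle g,\yd\rangle$, \ie $y_n\in\bbetterSol{\yd}$, which is the second alternative.

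I expect the main obstacle to be keeping the argument honest about two idealizations, which is also where ``sufficiently small $\alpha$'' is used. First, the reduction to the linearization $f$ is only a faithful picture of the true loss dynamics for small steps --- though note that the update direction $\dy$ evaluated at the current solver output equals $g$ as long as that output is still $\yd$, so the linearized and true iterations in fact coincide up to and including step $n$. Second, \emph{ties}: if $\yp$ is not the unique minimizer of $\langle\w,\cdot\rangle$, or if the ray $\w_k$ meets a boundary between solution regions exactly at an integer step, then ``$\y(\w_k)$'' is only defined up to the solver's tie-breaking rule, and one must still check that the very first jump lands in $\bbetterSol{\yd}$. I would close that gap by assuming a fixed tie-breaking rule (equivalently, restricting to generic $\w$, which holds off a finite union of hyperplanes), after which the three displayed inequalities above go through verbatim; the remainder is routine bookkeeping with the affine functions $h_y$.
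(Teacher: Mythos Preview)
Your argument is correct and follows the same skeleton as the paper's: identify the iterates as the ray $\w_k=\w+k\alpha g$, show the solution must eventually leave $\yd$ when $\bbetterSol{\yd}\ne\emptyset$ (your $h_{\bar y}(k)\to-\infty$ is the paper's Proposition~1), and then sandwich the first jump between optimality inequalities at consecutive iterates to force $\langle g,y_n-\yd\rangle<0$ (the paper's Proposition~3). Two points of comparison are worth recording.

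First, tie-breaking is handled differently. You rule out the equality case by assuming $\yd$ is the \emph{unique} minimizer at the initial $\w$ (genericity), which lets you trace the tie at step $n$ back to a tie at $\w$ itself. The paper instead imposes the rule ``if $\y(\w_{k-1})$ is still optimal at $\w_k$, keep it'' and derives the contradiction directly at step $n$. Both are legitimate; the paper's convention is slightly more robust in that it does not require anything of the initial cost.

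Second, your argument is actually stronger than the stated theorem: the optimality sandwich at $\w_{n-1}$ and $\w_n$ works for \emph{any} $\alpha>0$, so under your genericity assumption the conclusion holds without restricting the step size. The paper inserts an intermediate Proposition~2 that partitions the ray into intervals $I_1,\dots,I_m$ and sets $\alpha_{\max}=|I_2|$ to ensure the first jump lands in the adjacent region $I_2$; but since Proposition~3 (and your version of it) applies to whichever region the first jump lands in, this restriction is not in fact needed. Your remark that ``small $\alpha$'' is used for the linearization is off---the linearization $f$ is a standing setup assumption, not a consequence of small steps---but this does not affect the validity of your proof.
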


\subsection{Exploiting Solver Invariants} \label{method:projection}

In practice, \method can lead to problematic cost updates when the optimization problem is invariant to a certain transformation of the cost vector $\w$.
For instance, adding the same constant to every component of $\w$ will not affect its rank or top-$k$ indices.
Formally, this means that there exists a mapping $\costMap\colon\R^n\to\R^n$ of the cost vector $\w$ that does not change the optimal solver solution, \ie
\begin{equation} \label{eq:P-assumption}
	\argmin\nolimits_{\y\in\solSpace} \langle \w,\y \rangle
		= \argmin\nolimits_{\y\in\solSpace} \langle \costMap(\w),\y \rangle
	\quad\text{for every $\w\in\costSpace$}.
\end{equation}

\paragraph{Linear Transforms.}

Let us demonstrate why invariants can be problematic in a simplified case assuming that $\costMap$ is linear.
Consider an incoming gradient $\dy$, for which the \method method suggests the cost update $\dwI = -\dy$.
We can uniquely decompose $\dwI$ into $\dwI = \dwI_\Iparallel + \dwI_\Iperp$ where  $\dwI_\Iparallel = \costMap(\dwI)\in\im\costMap$ and $\dwI_\Iperp = (I - \costMap)\dwI \in \ker\costMap$.
Now observe that only the parallel update $\dwI_\Iparallel$ affects the updated optimization problem, as
\begin{equation} 
\begin{split} \label{eqn:absorb-perp}
	\argmin_{\y\in\solSpace} \langle \w - \alpha \dwI,\y \rangle
		& = \argmin_{\y\in\solSpace} \langle \costMap(\w - \alpha \dwI),\y \rangle
			\\
		& = \argmin_{\y\in\solSpace} \langle \costMap(\w - \alpha\dwI_\Iparallel),\y \rangle
			= \argmin_{\y\in\solSpace} \langle \w - \alpha\dwI_\Iparallel,\y \rangle
\end{split}
\end{equation}
for every $\w\in\costSpace$ and for any step size $\alpha>0$.
In the second equality, we used
\begin{equation} 
	\costMap(\w - \alpha\dwI) 
	 	= \costMap(\w - \alpha\dwI_\Iparallel) - \alpha\costMap(\dwI_\Iperp)
	 	= \costMap(\w - \alpha\dwI_\Iparallel),
\end{equation}
exploiting linearity and idempotency of $\costMap$. 

In the case when a user has no control about the incoming gradient $\dwI=-\dy$, the update $\dwI_\Iperp$ in $\ker\costMap$ can be much larger in magnitude than $\dwI_\Iparallel$ in $\im\costMap$.
In theory, if updates were applied directly in cost space, this would not be very problematic, as updates in $\ker\costMap$ do not affect the optimization problem.
However, in practice, the gradient is further backpropagated to update the weights in the components before the solver. Spurious irrelevant components in the gradient can therefore easily overshadow the relevant part of the update, which is especially problematic as the gradient is computed from stochastic mini-batch samples.

Therefore, it is desirable to discard the irrelevant part of the update.
Consequently, for a given incoming gradient $\dy$, we remove the irrelevant part ($\ker\costMap$) and return only its projected part
\begin{equation} \label{eqn:grad-with-P}
	\dwI_\Iparallel = -\costMap\dy[t].
\end{equation}

\paragraph{Nonlinear Transforms.}

For general $\costMap$, we use the chain rule to differentiate the composed function $\y\circ\costMap$ and set the solver's Jacobian to identity.
Therefore, for a given $\w$ and an incoming gradient $\dy$, we return
\begin{equation} \label{eqn:grad-with-P'}
	-\costMap'(\w)\dy[t].
\end{equation}
If $\costMap$ is linear, then $\costMap'(\w)=\costMap$ and hence update~\eqref{eqn:grad-with-P'} is consistent with linear case~\eqref{eqn:grad-with-P}.
Intuitively, if we replace $\costMap(\w-\alpha\dwI)$ in the above-mentioned considerations by its affine approximation $\costMap(\w)-\alpha\costMap'(\w)\dwI$, the term $\costMap'(\w)$ plays locally the role of the linear projection.

Another view on invariant transforms is the following.
Consider our combinatorial layer as a composition of two sublayers.
First, given $\w$, we simply perform the map $\costMap(\w)$ and then pass it to the $\argmin$ solver.
Clearly, on the forward pass the transform $\costMap$ does not affect the solution~$\yp$.
However, the derivative of the combinatorial layer is the composition of the derivatives of its sublayers, \ie $\costMap'(\w)$ for the transform and negative identity for the $\argmin$.
Consequently, we get the very same gradient~\eqref{eqn:grad-with-P'} on the backward pass.
In conclusion, enforcing guarantees on the forward pass by a mapping $\costMap$ is in this sense dual to projecting gradients onto $\im\costMap'$.

\paragraph{Examples.}
In our experiments, we encounter two types of invariant mappings. 
The first one is the standard projection onto a hyperplane.
It is always applicable when all the solutions in $\solSpace$ are contained in a hyperplane, \ie there exists a unit vector $a\in\R^n$ and a scalar $b\in\R$ such that $\langle a, \y \rangle = b$ for all $\y\in\solSpace$.
Consider the projection of $\w$ onto the subspace orthogonal to $a$ given by $\costMap_\text{plane}(\w|a) = \w - \langle a, \w \rangle a$.
This results in
\begin{equation}
	\argmin_{\y\in\solSpace} \langle \costMap_\text{plane}(\w|a),\y \rangle
		= \argmin_{\y\in\solSpace} \langle \w,\y \rangle - \langle a, \w \rangle b
		= \argmin_{\y\in\solSpace} \langle \w,\y \rangle
	\quad\text{for every $\w\in\costSpace$},
\end{equation}
thereby fulfilling assumption~\eqref{eq:P-assumption}.
This projection is relevant for the ranking and top-$k$ experiment, in which the solutions live on a hyperplane with the normal vector $a=\mathbf{1}/\sqrt{n}$.
Therefore, the projection 
\begin{equation} \label{eqn:proj-plane}
	\costMap_{\text{mean}}(\w)
		= \costMap_\text{plane}(\w|\nicefrac{\mathbf{1}}{\sqrt n})
		= \w - \langle \mathbf{1}, \w \rangle\nicefrac{\mathbf{1}}{n}
\end{equation}
is applied on the forward pass which simply amounts to subtracting the mean from the cost vector.

The other invariant mapping arises from the stability of the $\argmin$ solution to the magnitude of the cost vector.
Due to this invariance, the projection onto the unit sphere $\costMap_\text{norm}(\w) = {\w}/{\|\w\|}$ also fulfills assumption~\eqref{eq:P-assumption}.
As the invariance to the cost magnitude is independent of the solutions~$\solSpace$, normalization $\costMap_\text{norm}$ is always applicable and we, therefore, test it in every experiment.
Observe~that
\begin{equation} \label{eqn:proj-sphere-grad}
	\costMap_\text{norm}'(\w)
		= \Bigl( \frac{I}{\|\w\|} - \frac{\w \otimes \w}{\|\w\|^3} \Bigr)
\end{equation}
and the first order approximation of $\costMap_\text{norm}$ corresponds to the projection onto the tangent hyperplane given by $a=\w$ and $b=1$.
When both $\costMap_{\text{norm}}$ and $\costMap_{\text{mean}}$ are applicable, we speak about standardization $\costMap_{\text{std}}=\costMap_{\text{norm}}\circ\costMap_{\text{mean}}$.
An illustration of how differentiating through projections affects the resulting gradients is provided in \fig{fig:intuitive-updates=general} in \supp{sec:illustrations}. An experimental comparison of the incoming gradient $-\dy$ and the \method update $\dwI_\Iparallel = -\costMap'\dy$ is examined in \supp{app:discrete-samplers}.

\subsection{Identity Method as Solver Relaxation}
\label{sec:relaxation}
The \method method can also be viewed as differentiating a continuous relaxation of the combinatorial solver on the backward pass.
To see this connection, we relax the solver by a) relaxing the polytope $\solSpace$ to another smooth set $\widetilde{\solSpace}$ containing $\solSpace$ and b) adding a quadratic regularizer in case $\widetilde{\solSpace}$ is unbounded:
\begin{equation}
	\yp = \argmin_{y\in Y\subset \widetilde{Y}} \langle \w, y\rangle
    \quad\rightarrow\quad
    \widetilde\y(\w,\epsilon) = \argmin_{y\in \widetilde{Y}} \langle \w, y\rangle + \tfrac{\epsilon}{2} \|y\|^2.
\end{equation}
Using this viewpoint, we can interpret the vanilla \method method as the loosest possible relaxation $\widetilde{Y}=\R^n$, and the projections $\costMap_\text{mean}$, $\costMap_\text{norm}$, and $\costMap_\text{std}$ as relaxations of $Y$ to a hyperplane, a sphere, and an intersection thereof, respectively. For a more detailed discussion, see \supp{sec:projection-as-relaxation}.

\subsection{Preventing Cost Collapse and Inducing Margin}
\label{sec:cost-collapse}

Any $\argmin$ solver~\eqref{eqn:solver} is a piecewise constant mapping that induces a partitioning of the cost space $\costSpace$ into convex cones $\costSpace_\y=\{\w\in\costSpace:\y(\w)=\y\}$, $\y\in\solSpace$, on which the solution does not change.
The backbone network attempts to suggest a cost $\w\in\costSpace_\y$ that leads to a correct solution $\y$.
However, if the predicted cost $\w\in\costSpace_\y$ lies close to the boundary of $\costSpace_\y$, it is potentially sensitive to small perturbations.
The more partition sets $\{\costSpace_{\y_1},\ldots,\costSpace_{\y_k}\}$ meet at a given boundary point $\w$, the more brittle such a prediction $\w$ is, since all the solutions $\{\y_1,\ldots,\y_k\}$ are attainable in any neighbourhood of $\w$.
For example, the zero cost $\w=0$ is one of the most prominent points, as it belongs to the boundary of \emph{all} partition sets.
However, the origin is not the only problematic point, for example in the ranking problem, every point $\w=\lambda\textbf{1}$, $\lambda>0$, is \emph{equally bad} as the origin.

The goal is to achieve predictions that are \emph{far} from the boundaries of these partition sets.
For~$\solSpace=\{0,1\}^n$, \citet{rolinek2020cvpr} propose modifying the cost to $\w' = \w + \frac{\alpha}{2}\yd - \frac{\alpha}{2}(1-\yd)$ before the solver to induce an \emph{informed margin~$\alpha$}.
However, this requires access to the ground-truth solution~$\yd$. 

We propose to instead add a symmetric noise $\xi\sim p(\xi)$ to the predicted cost before feeding it to the solver.
Since all the partition sets' boundaries have zero measure (as they are of a lower dimension), this almost surely induces a margin from the boundary of the size $\E[|\xi|]$.
Indeed, if the cost is closer to the boundary, the expected outcome will be influenced by the injected noise and incorrect solutions will increase the expected loss giving an incentive to push the cost further away from the boundary.
In experiments, the noise is sampled uniformly from $\{-\tfrac\alpha2, \tfrac\alpha2\}^n$, where $\alpha>0$ is a hyperparameter.

In principle, careful design of the projection map $\costMap$ from Sec.~\ref{method:projection} can also prevent instabilities.
This would require that $\im\costMap$ avoids the boundaries of the partition sets, which is difficult to fully achieve in practice.
However, even \emph{reducing} the size of the problematic set by a projection is beneficial. For example, the normalization $\costMap_\text{norm}$ avoids brittleness around the origin, and $\costMap_\text{mean}$ avoids instabilities around every $\w=\lambda\textbf{1}$ for ranking.
For the other---less significant, but still problematic---boundaries, the noise injection still works in general without any knowledge about the structure of the solution~set.

\section{Experiments}

We present multiple experiments that show under which circumstances \method achieves performance that is on par with or better than competing methods. The experiments include backpropagating through discrete sampling processes, differentiating through a quadratic assignment solver in deep graph matching, optimizing for rank-based metrics in image retrieval, and a synthetic problem including a traveling salesman solver.
The runtimes for all experiments are reported in \secref{sec:runtimes}.

\subsection{Backpropagating through Discrete Samplers}
\label{experiment:discrete_samplers}

The sampling process of distributions of discrete random variables can often be reparameterized approximately as the solution to a noisy $\argmax$ optimization problem, see \eg \cite{paulus2020gradient}.
Sampling from the discrete probability distribution $y \sim p(y; \w)$ can then be formulated~as 
\begin{equation} \label{eqn:sampling_equation}
	y = \argmax\nolimits_{y \in Y} \langle \w + \epsilon, y \rangle
\end{equation}
with appropriate noise distribution $\epsilon$.
Typically a Gumbel distribution is used for $\epsilon$, but in certain situations, other distributions may be more suitable, \eg \citet{niepert2021implicit} use a sum-of-gamma noise distribution to model a top-$k$ distribution on the solutions.
Therefore, sampling fits our general hybrid architecture as illustrated in \fig{fig:architecture}.

\paragraph{Discrete Variational Auto-Encoder (DVAE).}

In a DVAE \citep{rolfe2016discrete}, the network layers before the \emph{sampling solver} represent the encoder and the layers after the \emph{sampling solver} the decoder.
We consider the task of training a DVAE on the \textsc{Mnist} dataset where the encoder maps the input image to a discrete distribution of $k$-hot binary vector of length $20$ in the latent space and the decoder reconstructs the image.

\begin{wrapfigure}[10]{r}{0.4\textwidth}\vspace{-1ex}
    \centering
    \includegraphics[width=\linewidth]{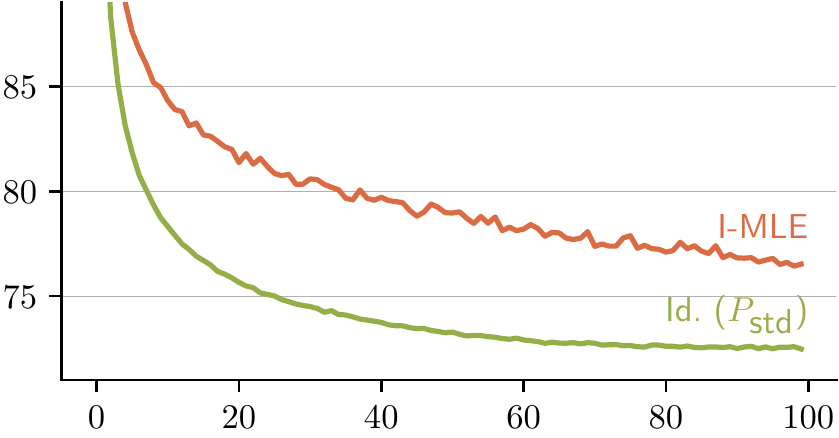}
    \caption{%
        N-ELBO training progress on the MNIST train-set ($k=10$).
    }
    \label{fig:dvae}
\end{wrapfigure}
The loss is the Negative Evidence Lower Bound (N-ELBO), which is computed as the sum of the reconstruction losses (binary cross-entropy loss on output pixels) and the KL divergence between the marginals of the discrete latent variables and the uniform distribution.
We use the same setup as \citet{jang2016categorical} and  \citet{niepert2021implicit}, details can be found in \supp{app:discrete-samplers}.
\Fig{fig:dvae} shows a comparison of \method to I-MLE \citep{niepert2021implicit}, which uses the \bbox update.
We observe that \method (with standardization $\costMap_{\text{std}}$) achieves a significantly lower N-ELBO.

\paragraph{Learning to Explain.}

We consider the BeerAdvocate dataset~\citep{mcauley2012learning} that consists of reviews of different aspects of beer: appearance, aroma, taste, and palate and their rating scores.
The goal is to identify a subset $k$ of the words in the text that best explains a given aspect rating \citep{chen2018learning,sahoo2021scaling}.
We follow the experimental setup of \cite{niepert2021implicit}, details can be found in \supp{app:discrete-samplers}. 
As in the DVAE experiment, the problem is modeled as a $k$-hot binary latent representation with $k=5,10,15$.
\method also uses the $\costMap_{\text{std}}$.
\begin{wraptable}[9]{r}{.49\linewidth}
    \centering
    \caption{
        BeerAdvocate (Aroma) statistics over 10 restarts.
				Projection is indispensable for \method.
        Baselines results$^\dagger$ are from \cite{niepert2021implicit}.
    }
    \label{table:l2x}\vspace{-.7em}
    \resizebox{\linewidth}{!}{
    \begin{tabular}{@{}lccc@{}} 
        \toprule
               & \multicolumn{3}{c}{Test MSE $\times 100$ $\downarrow$}\\
               \cmidrule(lr){2-4}
        Method & $k = 5$ & $k = 10$ & $k = 15$ \\ 
        \midrule                                     
        Id.~($\costMap_\text{std}$)   & $2.62 \pm 0.14$ & $2.47 \pm 0.11$ & $2.39 \pm 0.04$ \\
        Id.~(no $\costMap$)   & $4.75 \pm 0.06$ & $4.43 \pm 0.09$ & $4.20 \pm 0.11$ \\
        I-MLE$^\dagger$     & $2.62 \pm 0.05$ & $2.71 \pm 0.10$ & $2.91 \pm 0.18$ \\
        L2X$^\dagger$       & $5.75 \pm 0.30$ & $6.68 \pm 1.08$ & $7.71 \pm 0.64$ \\
        Softsub$^\dagger$   & $2.57 \pm 0.12$  & $2.67 \pm 0.14$  & $2.52 \pm 0.07$ \\
        \bottomrule
    \end{tabular}
    }
\end{wraptable}
We compare \method against I-MLE, L2X~\citep{chen2018learning}, and Softsub~\citep{xie2019reparameterizable} in \tab{table:l2x}. 
Softsub is a relaxation-based method designed specifically for subset sampling, in contrast to \method{} and I-MLE which are generic.
We observe that \method outperforms I-MLE and L2X, and is on par with Softsub for all $k$.
Evaluating \method without projection shows that projection is indispensable in this experiment.

\subsection{Deep Graph Matching}

\begin{wraptable}{r}{.42\linewidth}
    \centering
    \caption{
        Matching accuracy for Deep Graph Matching on \voc.
        Statistics over 5 restarts. \quad {\color{ourred}\rule[.5ex]{1em}{.2em}} \bbox \quad {\color{ourgreen}\rule[.5ex]{1em}{.2em}} \method
    }\vspace{-.3em}
    \label{table:deep_graph_matching}
    \resizebox{\linewidth}{!}{
    \begin{tabular}{@{}c@{}c@{\ }c@{\ }c@{}}
    \toprule
        $\costMap$
            & margin & $\alpha$
                &  Matching accuracy $\uparrow$
                    \\
    \midrule
        \multirow{9}{*}{--}
					& --
							& --
									& \includegraphics[width=17ex,align=c]{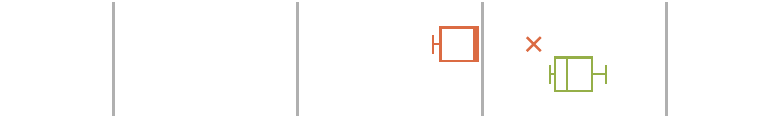}
											\\ \cmidrule(lr){2-4}
					& \multirow{4}{*}{noise}
							& $0.01$
									& \multirow{4}{*}{\includegraphics[width=17ex]{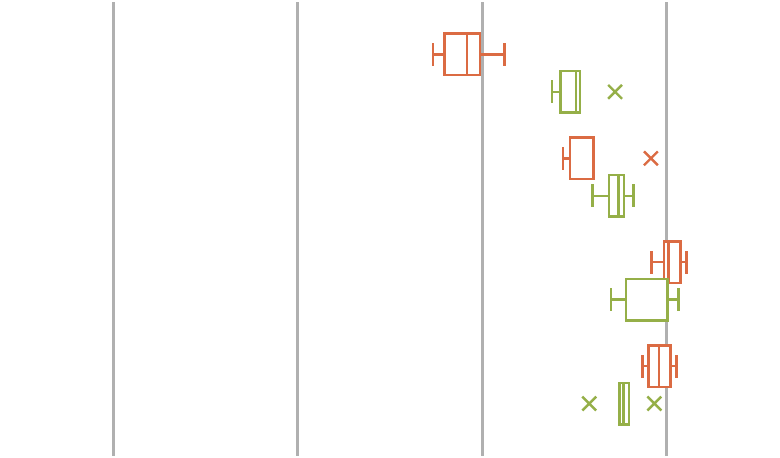}}
											\\
					&   
							& $0.1$
									&
											\\
					&    
							& $1.0$
									&
											\\
					&    
							& $10.0$
									&
											\\ \cmidrule(lr){2-4}
					& \multirow{4}{*}{informed}
							& $0.01$
									& \multirow{4}{*}{\includegraphics[width=17ex]{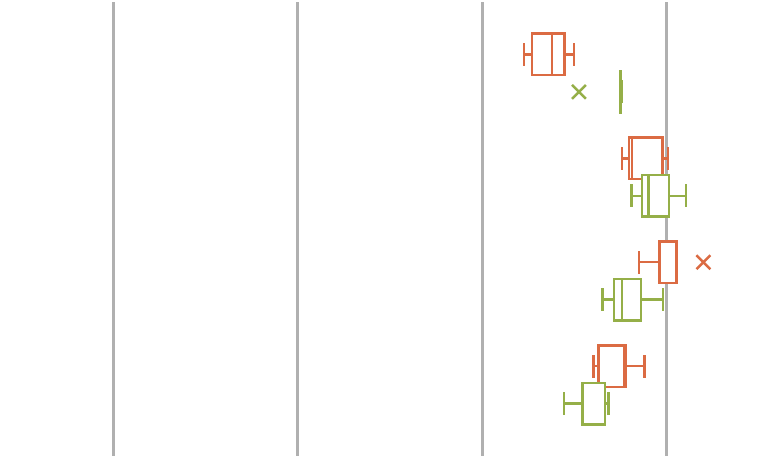}}
											\\
					&
							& $0.1$
									& 
											\\
					&    
							& $1.0$
									&
											\\
					& 
							& $10.0$
									& 
											\\ \midrule
        \multirow{9}{*}{$\costMap_{\text{norm}}$}
        	& --
            & -- 
                & \includegraphics[width=17ex,align=c]{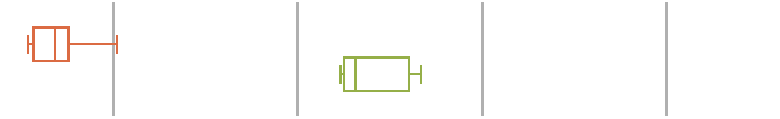}
                    \\ \cmidrule(lr){2-4}
					& \multirow{4}{*}{noise}
            & $0.01$
								& \multirow{4}{*}{\includegraphics[width=17ex]{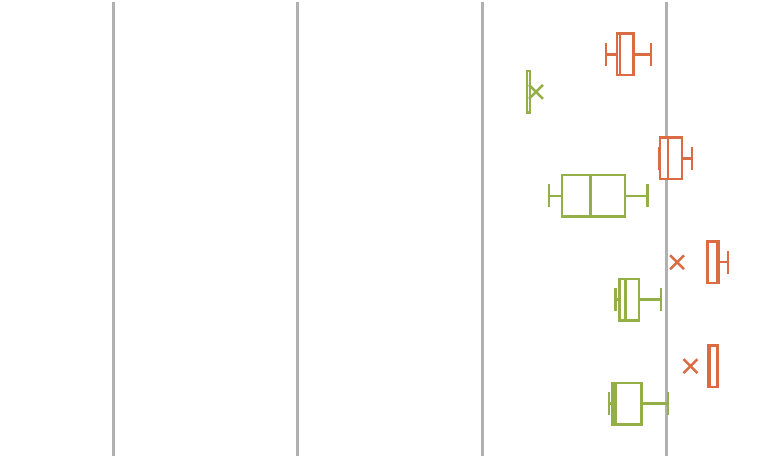}}
                    \\
					&
            & $0.1$
                &
											\\
					&    
							& $1.0$
									&
                    \\
					&
            & $10.0$
                &
                    \\ \cmidrule(lr){2-4}
					& \multirow{4}{*}{informed}
            & $0.01$
								& \multirow{4}{*}{\includegraphics[width=17ex]{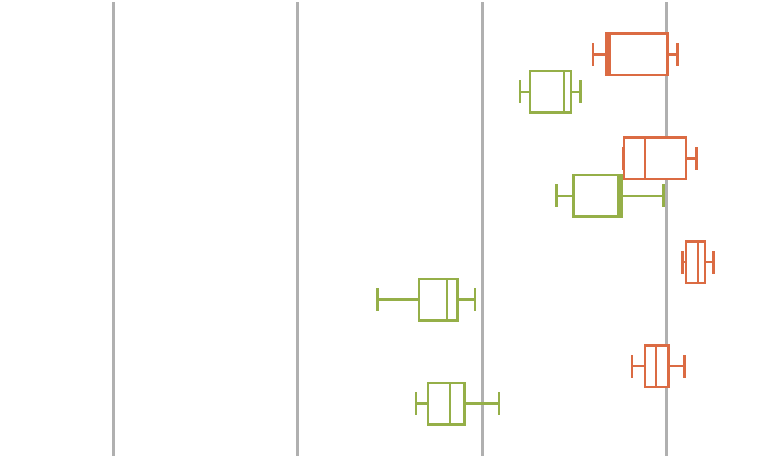}}
                    \\
					&
            & $0.1$
                &
                    \\
					&
            & $1.0$
                &
											\\
					&    
							& $10.0$
									&
                    \\
    \bottomrule
					&
							&
									& \hbox to 17ex{\tiny \hss $65$\hss $70$ \hss $75$\hss $80$\hss}
    \end{tabular}
    }
\end{wraptable}  
Given a source and a target image showing an object of the same class (\eg cat), each annotated with a set of keypoints (\eg right ear), the task is to match the keypoints from visual information without any access to the true keypoint annotation.
We follow the standard experimental setup from \cite{rolinek2020deep}, in which the cost matrices are predicted by a neural network from the visual information around the keypoints, for details see \supp{app:dgm}.
We report results for the two benchmark datasets
\voc 
(with Berkeley annotations)~\citep{everingham2010pascal} and
\spair
\citep{min2019spair71k}.

\Tab{table:deep_graph_matching} lists the average matching accuracy across classes.
We see that the use of a margin is important for the success of both evaluated methods, which agrees with observations by \citet{rolinek2020deep}.
We observe that the noise-induced margin achieves a similar effect as the informed margin from \cite{rolinek2020deep}.
Analogous results for \spair are in the \supp{app:dgm}.

Our results show that \method without the projection step yields performance that is on par with \bbox, while \method with the normalization step performs worse.
Such behaviour is not surprising.
Since the Hamming loss between the true and predicted matching is used for training, the incoming gradient for the solver directly points toward the desired solution, as discussed in~\ref{sec:simple-case}.
Consequently, any projection step is rendered unnecessary, and possibly even harmful.
Interestingly, the evaluation of \bbox with normalization, a combination of methods that we only evaluated for completeness, appears to produce results that slightly outperform the standard \bbox.

\subsection{Optimizing Rank Based Metrics -- Image Retrieval}
\label{experiments:recall}

\begin{wraptable}[13]{r}{.38\linewidth}
    \centering
    \caption{
        Recall $\mathit{R}@1$ for \cub with noise-induced margin.
        \hbox to \linewidth{\hss{\color{ourred}\rule[.5ex]{1em}{.2em}} \bbox\quad {\color{ourgreen}\rule[.5ex]{1em}{.2em}} \method\hss}
    }\vspace{-.5em}
    \label{table:cub:loss-merged}
    \resizebox{\linewidth}{!}{
    \begin{tabular}{@{}ccc@{\hskip1.5ex}c@{\ }c@{\ }}
    \toprule
        $\costMap$
            & $\alpha$
                & \multicolumn{3}{c}{Recall $R@1$ $\uparrow$}
                    \\
    \midrule
        \multirow{4}{*}{--}
					& $0$
							& \includegraphics[width=6.9ex,align=c]{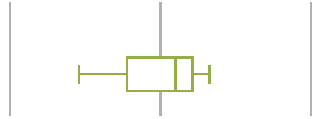}
								& \multirow{4}{*}{$\cdots$}
								& \includegraphics[width=8.5ex,align=c]{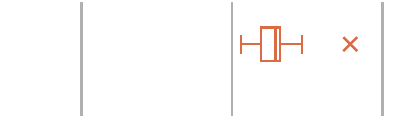}
									\\[-0.6pt]
					& $0.001$
							& \multirow{3}{*}{\includegraphics[width=6.9ex,align=c]{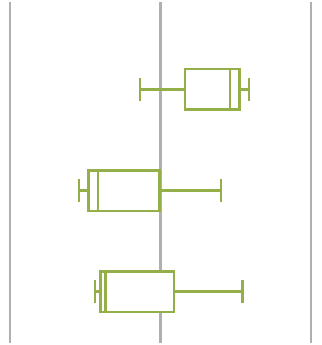}}
								&
								& \multirow{3}{*}{\includegraphics[width=8.5ex,align=c]{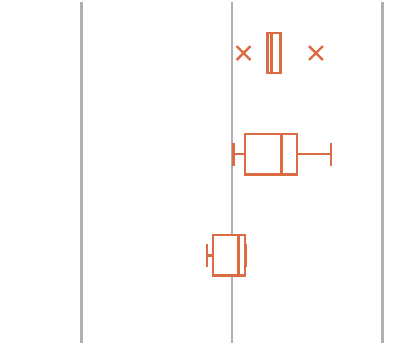}}
									\\
					& $0.01$
							&
								&
								&
									\\
					& $0.1$
							&
								&
								&
									\\ \cmidrule(l){2-5}
        \multirow{4}{*}{$\costMap_{\text{std}}$}
					& $0$
							& \includegraphics[width=6.9ex,align=c]{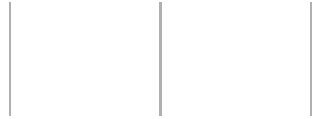}
								& \multirow{4}{*}{$\cdots$}
								& \includegraphics[width=8.5ex,align=c]{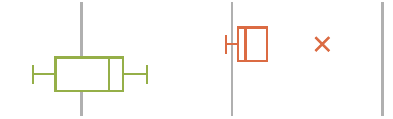}
									\\[-0.6pt]
					& $0.001$
							& \multirow{3}{*}{\includegraphics[width=6.9ex]{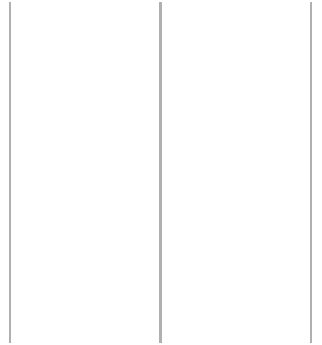}}
								&
								& \multirow{3}{*}{\includegraphics[width=8.5ex]{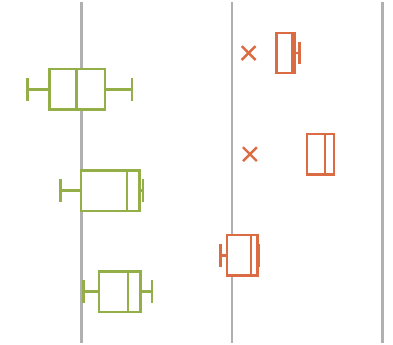}}
									\\
					& $0.01$
							&
								&
								&
									\\
					& $0.1$
							&
								&
								&
									\\
    \bottomrule
					&
							& \hbox to 8ex{\tiny $12$\hss $16$}
								& {\tiny //}
									& \hss\hbox to 8ex{\tiny$60$\hss $64$}\!\!
    \end{tabular}
    }
\end{wraptable}
In image retrieval, the task is to return images from a database that are relevant to a given query.
The performance is typically measured with the recall metric, which involves computing the ranking of predicted scores that measure the similarity of image pairs. 
However, directly optimizing for the recall is challenging, as the ranking operation is non-differentiable.
\looseness=-1

We follow \citet{rolinek2020cvpr} in formulating ranking as an $\argmin$ optimization problem with linear cost, \ie $\rank(\w) = \argmin_{y\in\Pi_n} \langle \w,y \rangle$, where $\Pi_n$ is the set of all rank permutations and $\w\in\R^n$ the vector of scores.
This allows us to use \method to directly optimize an architecture for recall.
We perform experiments on the image retrieval benchmark \cub~\citep{cub200},
and follow the experimental setup used by \citet{rolinek2020cvpr}.
Details are provided in~\supp{app:recall}.
 
The results are shown in~\tab{table:cub:loss-merged}.
We report only the noise-induced margin; the informed one performed similarly, see \supp{app:recall}.
Note that without the standardization $\costMap_{\text{std}}$, \method exhibits extremely low performance, demonstrating that the projection step is a crucial element of the pipeline.
The explanation is that, for ranking-based loss, the incoming gradient does not point toward achievable solutions, as discussed in \secref{sec:simple-case}.
For more details, including additional evaluation of other applicable projections, see \supp{app:recall}.
We conclude that for the retrieval experiment, \method does not match the \bbox performance.
Presumably, this is because the crude approximation of the permutahedron by a sphere ignores too much structural information.

\subsection{Globe Traveling Salesman Problem}
\label{experiment:globe_tsp}
\vskip 0em minus 0.3em

Finally, we consider the Traveling salesman Problem (TSP) experiment from \citet{VlastelicaEtal2020}.
Given a set of $k$ country flags, the goal for TSP($k$) is to predict the optimal tour on the globe through the corresponding country capitals, without access to the ground-truth location of the capitals. 
A neural network predicts the location of a set of capitals on the globe, and after computing the pairwise distances of the countries, a TSP solver is used to compute the predicted optimal TSP tour, which is then compared to the ground-truth optimal tour.
Setup details can be found in \supp{app:TSP}.

\vskip 0em minus 0.3em
\paragraph{Corrupting Gradients.} \label{paragraph:corrupting_gradients}

We study the robustness of \method and \bbox to potential perturbations during training in two scenarios: noisy gradients and corrupted labels.
We inflict the gradient with noise to simulate additional layers after the solver. In this setting we add $\xi\sim\mathcal N(0,\sigma^2)$ to $\d\ell/\d y$.
In real-world scenarios, incorrect labels are inevitably occurring.
We study random fixed corruptions of labels in the training data, by flipping entries in $\yd$ with probability $\rho/k$.

In \fig{fig:grad-noise-tsp} and \ref{fig:label-corruption-tsp} we observe that \method performs on par with \bbox for the standard non-corrupted setting, and outperforms it in the presence of gradient perturbations and label corruptions.
\Fig{fig:cost-norm-tsp} shows the average norm of the cost $\w$ in the course of training under gradient noise with $\sigma=0.25$.
After epoch 50, as soon as $\alpha$ is set to zero, cost collapse occurs quickly for \bbox, but not for \method.
In all the TSP experiments \method uses standardization $\costMap_{\text{std}}$.

\begin{figure}[t]
\centering
    \begin{subfigure}{0.32\linewidth}
        \includegraphics[width=\linewidth]{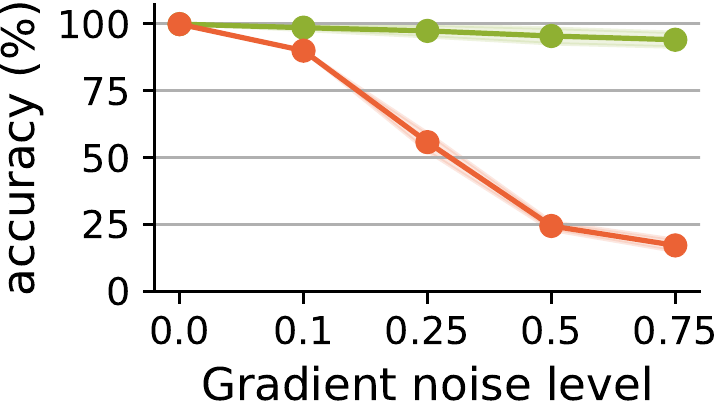}\ 
        \caption{Gradient noise $\sigma$}
        \label{fig:grad-noise-tsp}
    \end{subfigure}
    \hfill
    \begin{subfigure}{0.32\linewidth}
        \includegraphics[width=\linewidth]{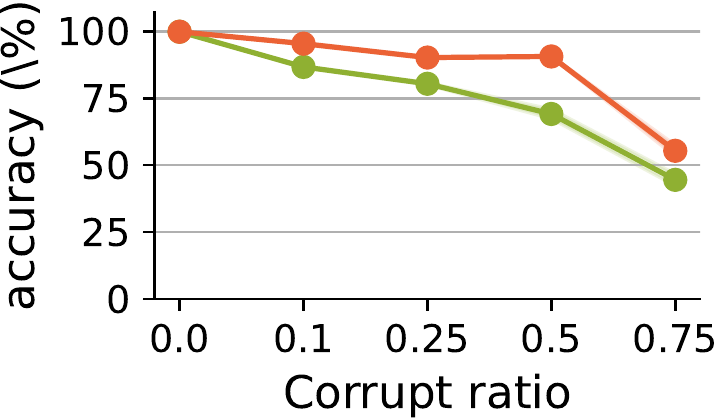}
        \caption{Label corruption $\rho$}
        \label{fig:label-corruption-tsp}
    \end{subfigure}
    \hfill
    \begin{subfigure}{0.32\linewidth}
        \includegraphics[width=\linewidth]{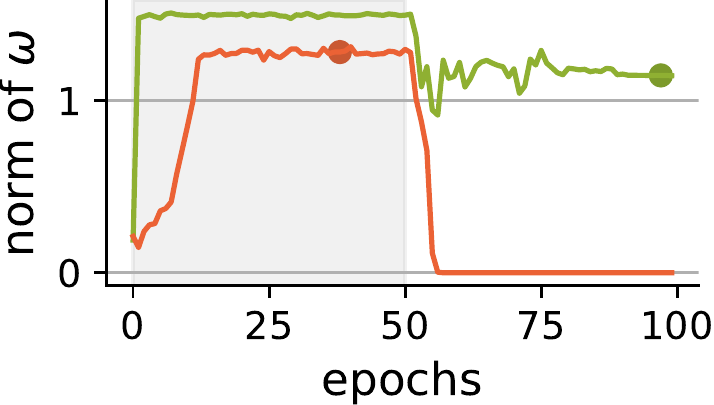}
        \caption{Cost norm $\|\w\|$}
        \label{fig:cost-norm-tsp}
    \end{subfigure}
    \vspace{-.5em}
    \caption{Susceptibility to perturbations and cost collapse in TSP(20). 
		\quad {\color{ourred}\rule[.5ex]{1em}{.2em}} \bbox \quad {\color{ourgreen}\rule[.5ex]{1em}{.2em}} \method\\
    (a) Adding noise to the gradient $\dy$ with std 
    $\sigma$ affects \method much less than \bbox.
    (b)  Corrupting labels $\yd$ with probability ${\rho}/{k}$.
    (c) Average cost norm with gradient noise $\sigma=0.25$.
    The markers indicate the best validation performance. 
    }
    \label{fig:globe_tsp_noise_corruption}
\end{figure}

\section{Conclusion}
\vskip 0em minus 0.5em

We present a simple approach for gradient backpropagation through discrete combinatorial solvers with a linear cost function, by treating the solver as a \emph{negative identity} mapping during the backward pass.
This approach, in conjunction with the exploitation of invariances of the solver via projections, makes up the \emph{\method} method. It is hyperparameter-free and does not require any additional computationally expensive call to the solver on the backward pass.
We demonstrate in numerous experiments from various application areas that \method achieves performance that is competitive with more complicated methods and is thereby a viable alternative. 
We also propose and experimentally verify the use of noise to induce a margin on the solutions and find that noise in conjunction with the projection step effectively increases robustness and prevents cost collapse. 
Finally, we analyze the robustness of methods to perturbations during training that can come from subsequent layers or incorrect labels and find that \method is more robust than competing methods. 
\looseness-1
\newpage

\section*{Acknowledgement}

This work was supported from Operational Programme Research, Development and Education -- Project Postdoc2MUNI (No.\ CZ.02.2.69/0.0/0.0/18\_053/0016952).
We thank the International Max Planck Research School for Intelligent Systems (IMPRS-IS) for supporting Anselm Paulus. We acknowledge the support from the German Federal Ministry of Education and Research (BMBF) through the Tübingen AI Center. This work was supported by the Cyber Valley Research Fund.

\section*{Reproducibility statement}
We provide extensive experimental details and all used hyperparameters for Discrete Samplers in \supp{app:discrete-samplers}, Deep Graph Matching in \supp{app:dgm}, Rank Based Metrics in \supp{app:recall}, Globe TSP in \supp{app:TSP}, and Warcraft Shortest Path in \supp{app:warcraft}. The datasets used in the experiments, \ie BeerAdvocate~\citep{mcauley2012learning},
\textsc{Mnist}~\citep{lecun2010mnist},
\spair~\citep{min2019spair71k},  
Globe TSP and Warcraft Shortest Path~\citep{VlastelicaEtal2020}, \cub~\citep{cub200} are publicly available.
A curated Github repository for reproducing all results is available at \href{https://github.com/martius-lab/solver-differentiation-identity}{github.com/martius-lab/solver-differentiation-identity}.

\newpage
\appendix

\section{Straight Through Estimator and Identity}
\label{app:sec:straight-through}

The origins of the Straight-through estimator (STE) are in the perceptron algorithm \citep{rosenblatt1958perceptron} for learning single-layer perceptrons. If the output function is a binary variable, the authors use the identity function as a proxy for the zero derivative of the hard thresholding function. \cite{hinton2012coursera} extended this concept to train multi-layer neural networks with binary activations, and \cite{hubara2016binarized} used the identity to backpropagate through the sign function, which is known as the saturated STE.

\cite{bengio2013estimating} considered training neural networks with stochastic binary neurons modelled as random Bernoulli variables $y(p)\in\{0,1\}$, with $p\in[0,1]$,  which are active (equal to one) with probability $p$ and inactive (equal to zero) with probability $1-p$. This can be reparametrized using the hard thresholding function as $y(p)=I_{[0, \infty)}(p - \epsilon)$, with $\epsilon\sim \text{Unif}([0,1])$. 
In the next step, \citet{bengio2013estimating} follow previous work by employing the identity as a replacement for the uninformative Jacobian of the hard thresholding function, and coin the term Straight-through estimator for it.

This application of the STE for differentiating through a discrete sampling process allows for an interesting probabilistic interpretation of the STE. To see this, note that the same Jacobian replacement as the one suggested by the STE, \ie the identity function, is computed by treating the sample $\hat y$ on the backward pass as if it were the expected value of the stochastic neuron, \ie 
\begin{align}
	\frac{\d}{\d p}\hat y(p)
		\approx \frac{\d}{\d p} \E \bigl[y(p)\bigr]
		= \frac{\d}{\d p} p
		= I.
\end{align}

This interpretation has also been considered by \citet{jang2016categorical, niepert2021implicit}, as it allows the extension of the STE concept to more complicated distributions. Most prominently, it has been applied in the setting of categorical distributions by \citet{jang2016categorical}. Encoding $d$ classes as one hot vectors $\solSpace=\{e_i\}_{i=1}^d$, they first define a distribution over classes as a discrete exponential family distribution
\begin{align}
	p_\w(\y)
		= \frac{\exp(\langle \y,\w\rangle)}{\sum_{\widetilde\y\in\solSpace}\exp(\langle \widetilde\y,\w\rangle)}.
\end{align}
Following the STE assumption, we now want to sample from the distribution on the forward pass, and treat it on the backward pass as if it were the expectation of the distribution. We can easily calculate the expectation as
\begin{align}
	\E_{\y\sim p_\w}[\y]
		= \sum_{\y\in \solSpace} \y \cdot p_\w(\y)
		= \operatorname{softmax}(\w),
\end{align}
and therefore the generalization of the STE gives the Jacobian replacement
\begin{align}
	\frac{\d}{\d\w}\hat \y(\w)
		\approx \frac{\d}{\d\w} \E_{\y\sim p_\w}[\y]
		= \frac{\d}{\d\w} \operatorname{softmax}(\w).
\end{align}
\citet{jang2016categorical} arrive at this result through a different reasoning by using the Gumbel-Argmax trick to reparametrize the categorical distribution as a stochastic argmax problem, and then relax the argmax to a softmax (see \cite{paulus2020gradient} for a more detailed discussion). Due to the derivation involving the Gumbel distribution, they refer to this estimator as the Straight-through Gumbel Estimator.

Note, that for general distributions, the expectation involved in applying an STE according to the previously described scheme is unfortunately intractable to compute, as exponentially many structures need to be considered in the expectation. This is the case even for distributions as simple as the top-$k$ distribution considered in \secref{experiment:discrete_samplers}.
Note, that this is the starting point for the motivation of \cite{niepert2021implicit}, who proceed by applying the concepts observed in \cite{VlastelicaEtal2020} to differentiating through general constrained discrete exponential family distributions.

From the previous examples, we see that our \method method and the STE are closely related. In fact, treating a hard thresholding function as identity on the backward pass is a special case of our method. To see this, we only need to phrase the thresholding function as the solution to an optimization problem
\begin{align}
	I_{[0, \infty)}(\w)
		= \argmax_{\y\in\{0,1\}^d} \langle \y, \w\rangle,
\end{align}
for which the \method method suggests the identity function as a Jacobian replacement, thereby matching the STE. 
From the discussion above, we know that this can also be interpreted in a probabilistic way for sampling from stochastic Bernoulli processes.

However, it is also apparent that this correspondence does not extend to other distributions. To see this, we reconsider the case of a discrete exponential family distribution over the set of one-hot vectors $\solSpace$. Following \cite{jang2016categorical} and \cite{maddison2017concrete}, we can simulate the sampling process by computing the solutions to a stochastic argmax problem, \ie
\begin{align}
    \y\sim p_\w
			\quad\equiv\quad \y = \argmax_{\widetilde\y\in\solSpace} \langle \w + \epsilon, \widetilde\y\rangle,
				\quad\epsilon \sim G^d(0,1).
\end{align}
Here, $G^d(0,1)$ denotes the $d$-dimensional Gumbel distribution.
As described above, applying the STE yields the Jacobian of the softmax function as the replacement for the true zero Jacobian.
In contrast, the \method method (without projections), still returns the identity as the replacement.
The benefit of this simplicity is that we can still apply our method to general distributions, where computing the expectation over all structures is intractable.

As a final remark, note that the term STE is somewhat overloaded, as in addition to the formulation above, it has been used to refer to any kind of operation in which some block is treated as an identity function on the backward pass. For example, \cite{bengio2013estimating} also consider the case of replacing on the backward pass not only the thresholding function, but also a preceding sigmoid computation with the identity function. \cite{yin2019understanding} even refer to the STE for cases of replacing a thresholding function with a leaky relu function on the backward pass. In this much looser sense, our \method method also falls into the category of STEs, as we also replace a computational block with the identity function on the backward pass.

\section{Experimental Details} \label{app:exp}

\subsection{Backpropagating Through Discrete Samplers} \label{app:discrete-samplers}

\paragraph{DVAE on MNIST.}

Consider the models described by the equations $\theta = f_e(x)$, $y \sim p(y; \theta)$, $z = f_d(y)$ where $x \in \mathcal{X}$ is the input, $o \in \mathcal{O}$ is the output, 
and $f_e\colon \mathcal{X} \to \theta$ is the encoder neural network that maps the input $x$ to the logits $\theta$ and $f_d\colon\mathcal{Y} \to\mathcal{Z}$ is the decoder neural network, and 
where $\mathcal{Y}$ is the set of all $k$-hot vectors.
Following~\citet{niepert2021implicit}, we set $\epsilon$ in sampling~\eqref{eqn:sampling_equation} as the Sum-of-Gamma distribution given by
\begin{equation}\label{eqn:app:sog}
    \operatorname{SoG}(k, \tau, s)
        = \frac{\tau}{k} \biggl(\sum_{i=1}^{s} \operatorname{Gamma}\Bigl(\frac 1k, \frac ki\Bigr) - \log s\biggr),
\end{equation}
where $s$ is a positive integer and $\operatorname{Gamma}(\alpha, \beta)$ is the Gamma distribution with $(\alpha, \beta)$ as the shape and scale parameters.

\begin{wrapfigure}[17]{r}{0.4\textwidth}
    \centering
    \includegraphics[width=.98\linewidth]{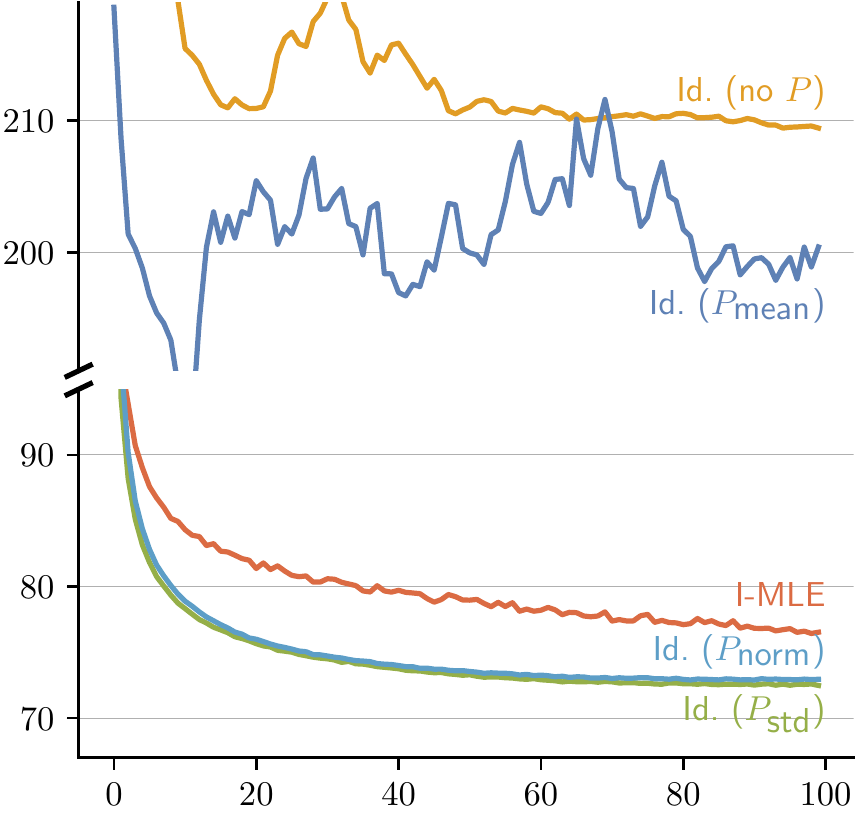}\vspace{-.4em}
    \caption{
        N-ELBO over training epoch for DVAE on MNIST ($k=10$), comparing I-MLE with \method for different projections.
		}
    \label{fig:app:dvae-ablations}
\end{wrapfigure}
We follow the same training procedure as \cite{niepert2021implicit}.
The encoder and the decoder were feedforward neural networks with the architectures: $512$-$256$-$20\times20$ and $256$-$512$-$784$ respectively. We used \textsc{Mnist}~\citep{lecun2010mnist} dataset for the problem which consisted of $50,000$ training examples and $10,000$ validation and test examples each. We train the model for $100$ epochs and record the loss on the test data. In this experiment, we use $k=10$ i.e. sample $10$-hot binary vectors from the latent space.
We sample the noise from $\operatorname{SoG}$ with $s=10$ and $\tau=10$.

\paragraph{Additional Experimental Results.}

To assess the effect of the individual projections that make up $P_\text{std}$, we evaluate our method using only $P_\text{mean}$ or $P_\text{norm}$.
The results reported in \fig{fig:app:dvae-ablations} show that both projections increase the performance individually (with a much larger effect of $P_\text{norm}$ than $P_\text{mean}$), but combined they perform the best.
This is intuitively understandable in terms of viewing projections as solver relaxations (as described in \supp{sec:projection-as-relaxation}).
No projection, mean projection, normalization, and standardization correspond to relaxations of $Y$ to $R^n$, a hyperplane $H$, a hypersphere $S$, and the intersection $S\cap H$, respectively, which increasingly better approximate the true $Y$.

Additionally, we examine how the spurious irrelevant part of the update direction returned by the \method method without projection can lead to problematic behaviour. 
First, we test whether adaptive optimizers contribute to the problematic behaviour by adapting the learning rate to the irrelevant component of the update. 
As reported in \fig{fig:app:dvae-ablations}, we observe a large performance gap between using projections and not using projections in the DVAE experiment. 
Therefore we rerun this experiment with an SGD optimizer instead of Adam, to check whether a non-adaptive optimizer is able to outperform the adaptive one. 
The results are reported in \fig{fig:app:dvae-ablations-opt}, where we observe a slightly improved performance from the SGD optimizer, but no major difference between the the optimizers.

Next, we examine how much differentiating through the projection affects the returned gradient in terms of magnitude and direction. 
In \fig{fig:app:dvae-ablations-grad} we compare the incoming gradient $-\dy$ and the returned update direction $\dwI$ after differentiating through the projection in terms of norm ratio and cosine similarity. 
We observe that the gradient is significantly altered by the projection, \ie the cosine similarity between the original and the projected gradient is smaller than $0.05$. Given that with the projection the loss is optimized well (in contrast to not using projections), we can conclude that relevant information is indeed retained whereas spurious directions are removed.

\begin{figure}[tbh]
    \vspace{1em}
		\hfill
    \begin{subfigure}[t]{.45\linewidth}
    \includegraphics[width=\linewidth]{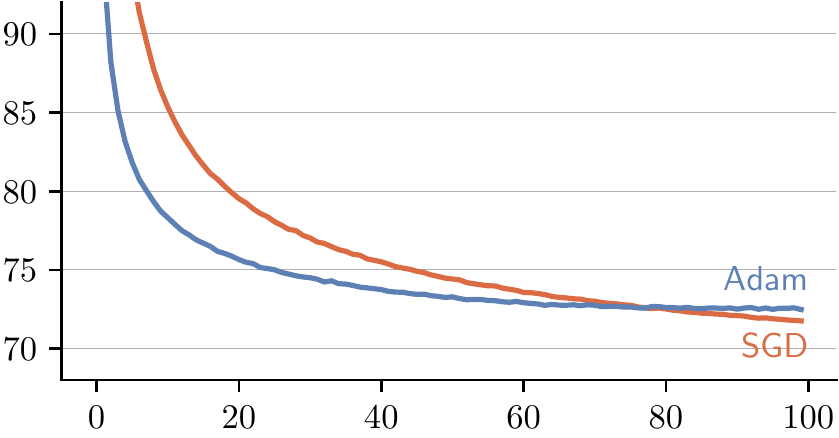}
    \caption{\method using $\costMap_\text{std}$}
    \label{fig:app:dvae-ablations-opt-a}
    \end{subfigure}
		\hfill
		\hfill
    \begin{subfigure}[t]{.45\linewidth}
    \includegraphics[width=\linewidth]{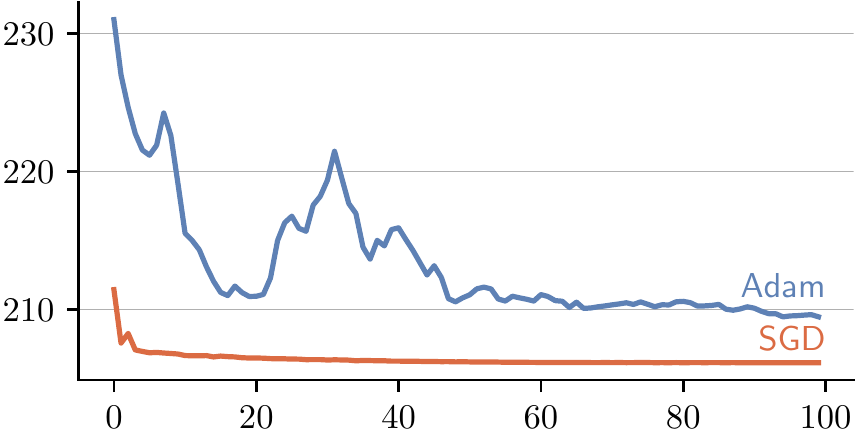}
    \caption{\method with no transform $\costMap$.}
    \label{fig:app:dvae-ablations-opt-b}
    \end{subfigure}
		\hfill
		\hfill
    \caption{Training progress of DVAE experiment on the MNIST train-set ($k=10$), comparing the \method method for different optimizers.
				Reported is N-ELBO over training epoch.}
    \label{fig:app:dvae-ablations-opt}
\end{figure}

\begin{wrapfigure}[18]{r}{0.3\textwidth}
    \centering
    \includegraphics[width=\linewidth]{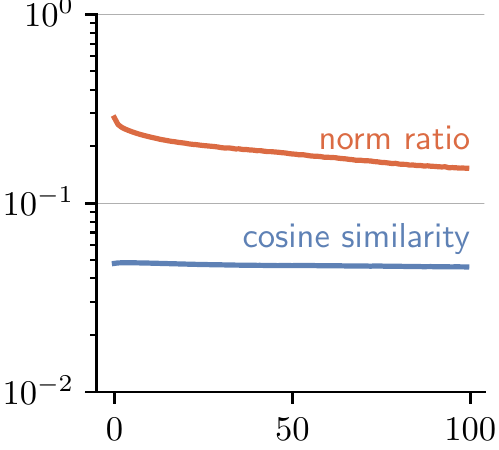}
    \caption{Training progress of DVAE on MNIST ($k=10$), comparing the incoming gradient $-\dy$ with the update direction computed by the \method method with projection $P_\text{std}$.
				Reported are the norm ratio and cosine similarity over training epoch.
		}
    \label{fig:app:dvae-ablations-grad}
\end{wrapfigure}
\paragraph{Learning to Explain.}

The entire dataset consists of 80k reviews for the aspect Appearance and 70k reviews for all the remaining aspects.
Following the experimental setup of \cite{niepert2021implicit}, the dataset was divided into 10 different evenly sized validation/test splits of the 10k held out set. 
For this experiment, the neural network from \cite{chen2018learning} was used which had 4 convolutional and 1 dense layer. The neural network outputs the parameters $\theta$ of the pdf p($y$; $\theta$) over $k$-hot binary latent masks with $k=5, 10, 15$. The same hyperparameter configuration was the same as that of \cite{niepert2021implicit}. 
We compute mean and standard deviation over 10 models, each trained on one split.

\subsection{Deep Graph Matching} \label{app:dgm}

We closely follow the experimental setup in \cite{rolinek2020deep}, including the architecture of the neural network and the hyperparameter configuration to train the network.

The architecture consists of a pre-trained backbone that, based on the visual information, predicts feature vectors for each of the keypoints in an image. The keypoints are then treated as nodes of a graph, which is computed via Delauney triangulation. A graph neural network then refines the keypoint features by employing a geometry-aware message passing scheme on the graph. 
A matching instance is established by computing similarity scores between the edges and nodes of the keypoint graphs of two images via a learnable affinity layer. The negated similarity scores are then used as inputs to the graph matching solver, which produces a binary vector denoting the predicted matching. This matching is finally compared to the ground-truth matching via the Hamming loss. For a more detailed description of the architecture, see \cite{rolinek2020deep}.

We run experiments on \spair~\citep{min2019spair71k} and \voc (with Berkeley annotations)~\citep{everingham2010pascal}.
The \voc dataset consists of images from $20$ classes with up to $23$ annotated keypoints. To prepare the matching instances we use intersection filtering of keypoints, \ie we only consider keypoints that are visible in both images of a pair. For more details on this, we refer the reader to \cite{rolinek2020deep}.
The \spair dataset consists of $70,958$ annotated image pairs, with images from the \vocxii and \pascal 3D+ datasets. It comes with a pre-defined train--validation--test split of $53,340$--$5,384$--$12,234$. 

We train all models for $10$ epochs of $2000$ training iterations each, with image pairs processed in batches of $8$. As reported in \cite{rolinek2020deep}, we set the \bbox hyperparameter $\lambda=80$.
We use the Adam optimizer with an initial learning rate of $2\times10^{-3}$ which is halved every $2$ epochs. The learning rate for finetuning the VGG weights is multiplied by $10^{-2}$.
As in \cite{rolinek2020deep} we use a state-of-the-art dual block coordinate ascent solver \citep{swoboda2017study} based on Lagrange decomposition to solve the combinatorial graph matching problem.

We report the results from \tab{table:deep_graph_matching} again with their corresponding numbers in \tab{table:app:deep_graph_matching}.
\vspace{1em}

\begin{table}[tbh]
	
    \centering
    \caption{
        Matching accuracy ($\uparrow$) for Deep Graph Matching.
    }
    \label{table:app:deep_graph_matching}
    \resizebox{\linewidth}{!}{
    \begin{tabular}{@{}c@{}ccc@{\hskip1em}c@{}cc@{\hskip0.7em}c@{}c@{}}
    \toprule
				&
					&
						& \multicolumn{3}{c}{\voc}
							& \multicolumn{3}{c}{\spair}
									\\ \cmidrule(lr{2\cmidrulekern}){4-6} \cmidrule(l{2\cmidrulekern}r){7-9}
			$\costMap$
				& margin
					& $\alpha$
							& \method~{\color{ourgreen}\rule[.5ex]{1em}{.2em}}
								& \bbox~{\color{ourred}\rule[.5ex]{1em}{.2em}}
									& 
										& \hskip\cmidrulekern\method~{\color{ourgreen}\rule[.5ex]{1em}{.2em}}
											& \bbox~{\color{ourred}\rule[.5ex]{1em}{.2em}}
												& 
									\\
		\cmidrule[\lightrulewidth](r){1-6} \cmidrule[\lightrulewidth](l){7-9}
        \multirow{9}{*}{--}                       & --                        & --     & $77.2 \pm 0.7$ & $74.9 \pm 0.9$ & \includegraphics[width=17ex,align=c]{figures/matching/voc/never_none.pdf}               & $78.6 \pm 0.4$ & $78.1 \pm 0.4$ & \includegraphics[width=17ex,align=c]{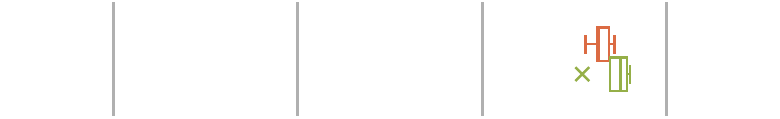}               \\ \cmidrule(lr){2-6} \cmidrule(l){7-9}
					                                        & \multirow{4}{*}{noise}    & $0.01$ & $77.6 \pm 0.7$ & $74.6 \pm 0.8$ & \multirow{4}{*}{\includegraphics[width=17ex]{figures/matching/voc/never_noise.pdf}}     & $78.7 \pm 0.3$ & $77.9 \pm 0.5$ & \multirow{4}{*}{\includegraphics[width=17ex]{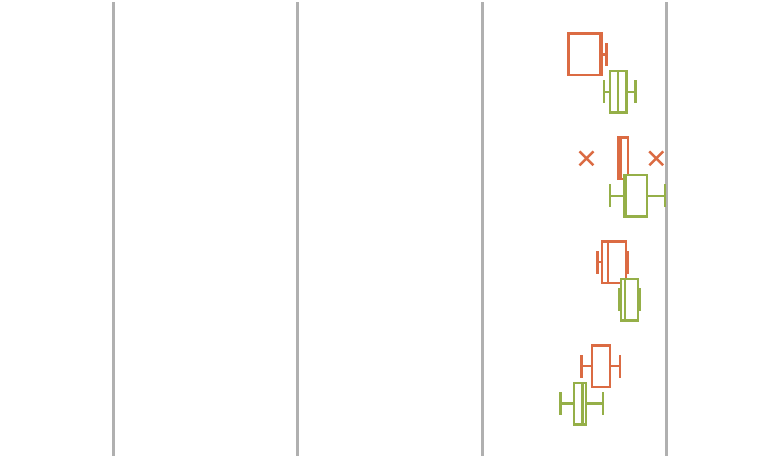}}     \\
				                                          &                           & $0.1$  & $78.6 \pm 0.4$ & $78.0 \pm 0.9$ &                                                                                         & $79.1 \pm 0.6$ & $78.8 \pm 0.7$ &                                                                                           \\
					                                        &                           & $1.0$  & $79.6 \pm 0.8$ & $80.1 \pm 0.4$ &                                                                                         & $79.0 \pm 0.3$ & $78.5 \pm 0.4$ &                                                                                           \\
					                                        &                           & $10.0$ & $78.8 \pm 0.6$ & $79.8 \pm 0.4$ &                                                                                         & $77.7 \pm 0.4$ & $78.2 \pm 0.4$ &                                                                                           \\ \cmidrule(lr){2-6} \cmidrule(l){7-9}
					                                        & \multirow{4}{*}{informed} & $0.01$ & $78.5 \pm 0.5$ & $76.8 \pm 0.6$ & \multirow{4}{*}{\includegraphics[width=17ex]{figures/matching/voc/never_informed.pdf}}  & $78.9 \pm 0.5$ & $78.7 \pm 0.3$ & \multirow{4}{*}{\includegraphics[width=17ex]{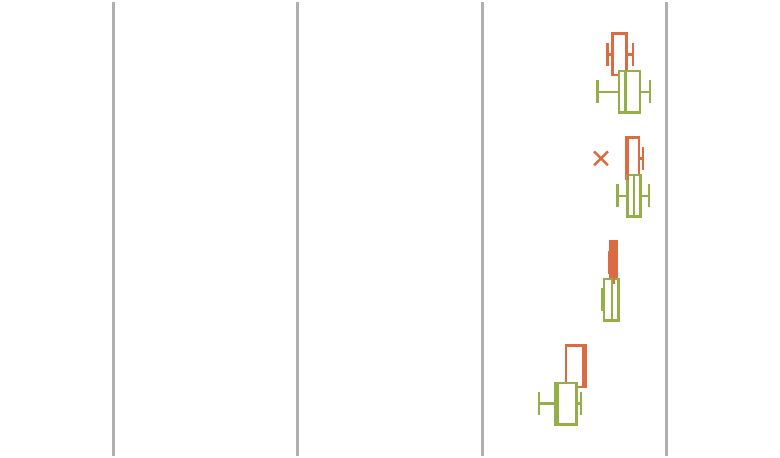}}  \\
				                                        	&                           & $0.1$  & $79.7 \pm 0.6$ & $79.3 \pm 0.6$ &                                                                                         & $79.1 \pm 0.3$ & $78.9 \pm 0.4$ &                                                                                           \\
				                                        	&                           & $1.0$  & $79.0 \pm 0.6$ & $80.1 \pm 0.6$ &                                                                                         & $78.5 \pm 0.2$ & $78.6 \pm 0.1$ &                                                                                           \\
				                                        	&                           & $10.0$ & $77.9 \pm 0.5$ & $78.7 \pm 0.6$ &                                                                                         & $77.2 \pm 0.5$ & $68.6 \pm 20$  &                                                                                           \\ \cmidrule[\lightrulewidth](r){1-6} \cmidrule[\lightrulewidth](l){7-9}
        \multirow{9}{*}{$\costMap_{\text{norm}}$} & --                        & --     & $72.6 \pm 1.0$ & $63.5 \pm 0.8$ & \includegraphics[width=17ex,align=c]{figures/matching/voc/before_none.pdf}              & $75.9 \pm 0.8$ & $66.7 \pm 0.6$ & \includegraphics[width=17ex,align=c]{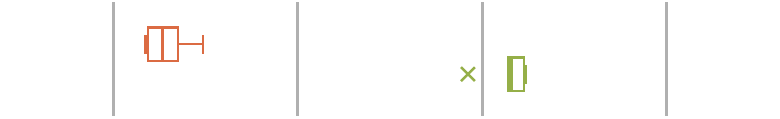}              \\ \cmidrule(lr){2-6} \cmidrule(l){7-9}
					                                        & \multirow{4}{*}{noise}    & $0.01$ & $76.3 \pm 0.1$ & $78.9 \pm 0.5$ & \multirow{4}{*}{\includegraphics[width=17ex]{figures/matching/voc/before_noise.pdf}}    & $77.1 \pm 0.3$ & $78.2 \pm 0.4$ & \multirow{4}{*}{\includegraphics[width=17ex]{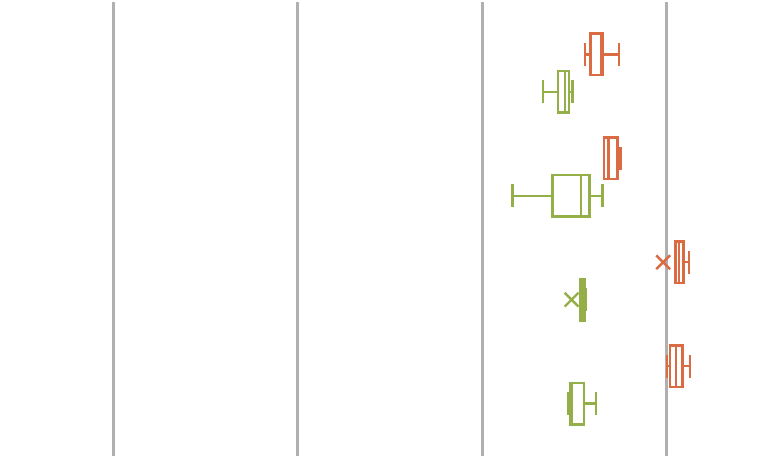}}    \\
				                                         	&                           & $0.1$  & $78.1 \pm 1.1$ & $80.1 \pm 0.4$ &                                                                                         & $77.3 \pm 1.0$ & $78.5 \pm 0.2$ &                                                                                           \\
				                                         	&                           & $1.0$  & $79.1 \pm 0.5$ & $81.2 \pm 0.5$ &                                                                                         & $77.7 \pm 0.2$ & $80.3 \pm 0.3$ &                                                                                           \\
				                                         	&                           & $10.0$ & $79.0 \pm 0.7$ & $81.1 \pm 0.3$ &                                                                                         & $77.6 \pm 0.4$ & $80.3 \pm 0.3$ &                                                                                           \\ \cmidrule(lr){2-6} \cmidrule(l){7-9}
					                                        & \multirow{4}{*}{informed} & $0.01$ & $76.9 \pm 0.7$ & $79.0 \pm 1.0$ & \multirow{4}{*}{\includegraphics[width=17ex]{figures/matching/voc/before_informed.pdf}} & $77.7 \pm 0.3$ & $78.5 \pm 0.2$ & \multirow{4}{*}{\includegraphics[width=17ex]{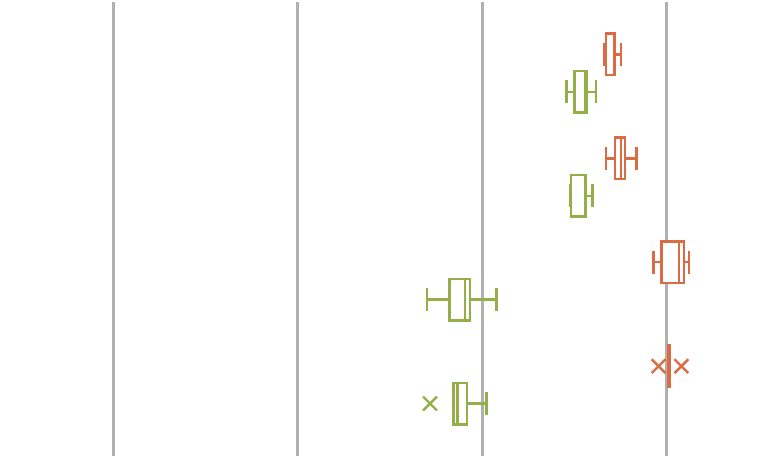}} \\
				                                         	&                           & $0.1$  & $78.4 \pm 1.1$ & $79.7 \pm 0.9$ &                                                                                         & $77.7 \pm 0.3$ & $78.7 \pm 0.3$ &                                                                                           \\
				                                         	&                           & $1.0$  & $73.7 \pm 1.0$ & $80.8 \pm 0.4$ &                                                                                         & $74.4 \pm 0.7$ & $80.2 \pm 0.4$ &                                                                                           \\
				                                         	&                           & $10.0$ & $74.2 \pm 0.9$ & $79.7 \pm 0.6$ &                                                                                         & $74.4 \pm 0.6$ & $80.1 \pm 0.2$ &                                                                                           \\
    \bottomrule
				&
					&
						& & & \hss\hbox to 18ex{\tiny \hss $65$\hss $70$ \hss $75$\hss $80$\hss}\hss
						& & & \hss\hbox to 18ex{\tiny \hss $65$\hss $70$ \hss $75$\hss $80$\hss}\hss
    \end{tabular}
    }
\end{table}

\subsection{Rank-based Metrics -- Image Retrieval Experiments}\label{app:recall}

\paragraph{Ranking as a Solver.}

Ranking can be cast as a combinatorial $\argmin$ optimization problem with linear cost by a simple application of the permutation inequality as described in \citet{rolinek2020cvpr}. It holds that 
\begin{equation} 
	\rank(\w) = \argmin_{y\in\Pi_n} \langle \w,y \rangle ,
	\label{eqn:rank-solver-def}
\end{equation}
where $\Pi_n$ is the set of all rank permutations and $\w\in\R^n$ is the vector of individual scores.%

A popular metric in the retrieval literature is recall at $K$, denoted by $\rak$, which can be formulated using ranking as
\begin{equation} \label{eq:rec-k}
	\rak(\w,\yd) =
		\begin{cases}
			1 & \text{if there is $i \in \rel(\yd)$ with $\rank(\w)_i \le K$}\\
			0 &\text{otherwise,}
		\end{cases}
\end{equation}
where $K\in\N$, $\w\in\R^n$ are the scores, $\yd\in\{0,1\}^n$ are the ground-truth labels and $\rel(\yd)=\{i : \yd_i = 1\}$ is the set of relevant label indices (positive labels).
Consequently,  $\rak$ is going to be $1$ if a positively labeled element is among the predicted top $K$ ranked elements.
Due to computational restrictions, at training time, instead of computing $\rak$ over the whole dataset a sample-based version is used.

\paragraph{Recall Loss.}

Since $\rak$ depends only on the top-ranked element from $\yd$, the supervision for $\rak$ is very sparse.
To circumvent this,  \citet{rolinek2020cvpr} propose a loss
\begin{equation}
\label{eqn:recall-loss}
\ell_{\text{recall}}(\w,\yd) =
    \frac{1}{|\rel(\yd)|} \sum_{i \in \rel(\yd)} \log\bigl(1+\log\bigl(1 + \rank(\w)_i - \rank(\w^+)_i\bigr)\bigr),
\end{equation}
where $\rank(\w^+)_i$ denotes the rank of the $i$-th element only within the relevant ones. This loss is called \textit{loglog} in the paper that proposed it. We follow their approach and use this loss in all image retrieval experiments.

\paragraph{Experimental Configuration.}

We closely follow the training setup and procedure reported in \citet{rolinek2020cvpr}. 
The model consists of a pre-trained ResNet50 \citep{he2016deep} backbone in which the final softmax is replaced with a fully connected embedding layer, producing a $512$-dimensional embedding vector for each batch element. These vectors are then used to compute pairwise similarities between batch elements. The ranking of these similarities is then used to compute the recall loss~\eqref{eqn:recall-loss}, for additional details we refer the reader to \cite{rolinek2020cvpr}.

We train all models for $80$ epochs using the Adam optimizer, using a learning rate of $5\times 10^{-7}$, which was the best performing learning rate for both \method and \bbox out of a grid search over $5$ learning rates.
In all experiments a weight decay of $4 \times 10^{-4}$ is used, as well as a drop of the learning rate by $70\%$ after $35$ epochs. The learning rate of the embedding layer is multiplied by $3$, following previous work.
The \bbox parameter $\lambda$ was set to $0.2$ for all retrieval experiments.
Images are processed in batches of $128$, and three previous batches are always kept in memory to compute the recall to better approximate the recall over the whole dataset, see \cite{rolinek2020cvpr} for more details.

\paragraph{Sparse Gradient \& Difference between \method and \bbox.}

As described in \secref{experiments:recall}, \method performs worse than \bbox in the retrieval experiments. The reason is the sparsity of the gradient generated from the loss~\eqref{eqn:recall-loss}: it only yields a nonzero gradient for the relevant images (positive labels), and provides no information for the images irrelevant to the query. This sparseness makes training very inefficient.

The \bbox update circumvents this by producing a new ranking $y_\lambda$ for a perturbed input. The difference between the two rankings is then returned as the gradient, which contains information for both positive and negative examples. This process is illustrated in \Fig{fig:viz:ranking}.

\paragraph{Ablations.}
To assess the effect of the individual projections that make up $P_\text{std}$, we also evaluate our method using only $P_\text{mean}$ or $P_\text{norm}$ and report them in \tab{table:cub:ablation}. We observe the same trend as for the DVAE ablations reported in \fig{app:discrete-samplers}, and therefore refer to the same explanation in terms of increasingly tight relaxations of the solution set.

\begin{figure}[t]
\centering
    \begin{subfigure}[b]{0.5\linewidth}
        \centering
        \includegraphics[width=\linewidth, align=b]{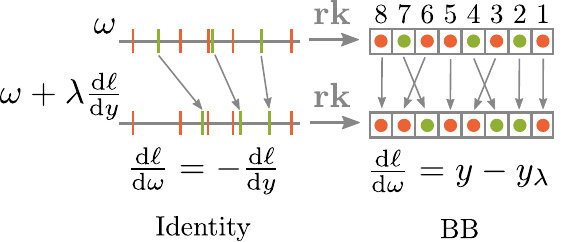}
        \caption{Visualization of the different ranking gradients.}
        \label{fig:viz:ranking}
    \end{subfigure}
    \hfill
    \begin{subfigure}[b]{0.49\linewidth}
        \centering
        \ourlegend\\ 
        \includegraphics[width=.8\linewidth, align=b]{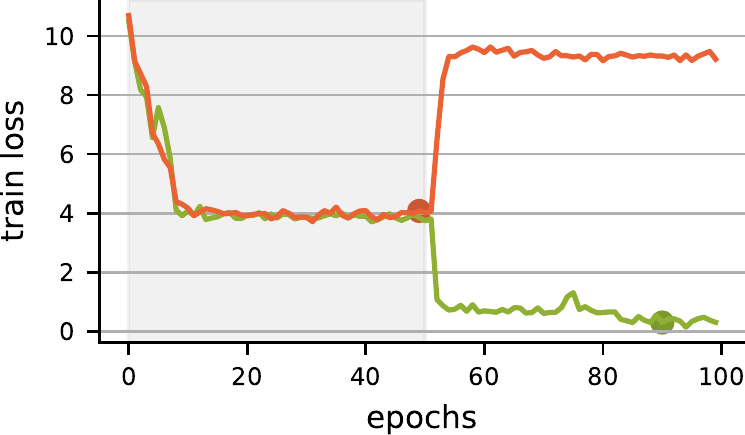}
        \caption{Training for TSP($5$) for gradient noise level $0.5$.}
        \label{fig:train-loss-tsp}
    \end{subfigure}
    \vspace{-.5em}
    \caption{
    (a) Left: the scores $\w$, with $\rel(\yd)$ in green and their shifted counterparts below.
		The grey arrows correspond to the \method{}-gradient. Right: \bbox performs another ranking $y_\lambda$ with the shifted $\w$ which yields a denser gradient.
    (b)
    To check for the robustness of the methods, we apply gradient noise during the entire training time ($100$ epochs).
		The shaded grey region highlights the epochs for which margin-inducing noise ($\xi$) was applied to the inputs to prevent cost collapse.
		The dots on the training curve represent early stopping epochs.}
    \label{fig:}
\end{figure}

\begin{table}[thb]
	
    \centering
    \caption{
       Recall $\mathit{R}@1$ ($\uparrow$) for \cub with various projections $P$ and margin types.
    }
    \label{table:cub:ablation}
    \resizebox{\linewidth}{!}{
    \begin{tabular}{@{}ccc@{\hskip1em}ccc@{\hskip0.7em}cc@{\ }}
    \toprule
			&
				& \multicolumn{3}{c}{Noise-induced margin}
					& \multicolumn{3}{c}{Informed margin}
									\\ \cmidrule(lr{2\cmidrulekern}){3-5} \cmidrule(l{2\cmidrulekern}r){6-8}
        $\costMap$
            & $\alpha$
                & \method~{\color{ourgreen}\rule[.5ex]{1em}{.2em}}
									& \bbox~{\color{ourred}\rule[.5ex]{1em}{.2em}}
										& 
											& \hskip\cmidrulekern\method~{\color{ourgreen}\rule[.5ex]{1em}{.2em}}
												& \bbox~{\color{ourred}\rule[.5ex]{1em}{.2em}}
													& 
                    \\
		\cmidrule[\lightrulewidth](r){1-5} \cmidrule[\lightrulewidth](l){6-8}
        \multirow{4}{*}{--}
					& $0$ 		& $13.9 \pm 0.9$ & $62.6 \pm 0.4$ & \includegraphics[width=17ex,align=c]{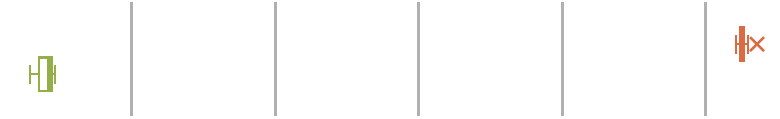}                   & $13.9 \pm 0.9$ & $62.6 \pm 0.4$ & \includegraphics[width=17ex,align=c]{figures/retrieval/cub/ablations/none_none.pdf}                  \\[-0.6pt]
					& $0.001$	& $14.5 \pm 1.2$ & $62.6 \pm 0.3$ & \multirow{3}{*}{\includegraphics[width=17ex,align=c]{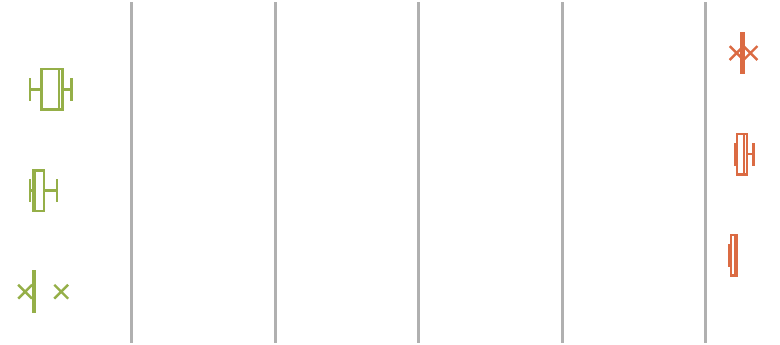}} & $16.2 \pm 1.7$ & $63.0 \pm 0.3$ & \multirow{3}{*}{\includegraphics[width=17ex,align=c]{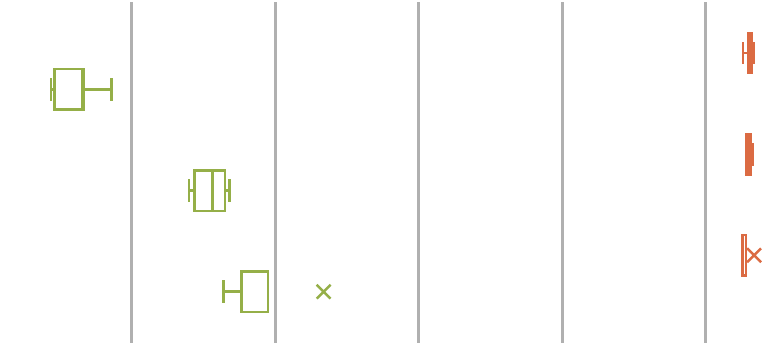}}\\
					& $0.01$ 	& $13.6 \pm 0.8$ & $62.6 \pm 0.5$ &                                                                                                       & $25.5 \pm 1.2$ & $63.0 \pm 0.2$ &                                                                                                      \\
					& $0.1$ 	& $13.4 \pm 1.0$ & $62.0 \pm 0.2$ &                                                                                                       & $28.9 \pm 2.7$ & $62.8 \pm 0.4$ &                                                                                                      \\ \cmidrule(lr){2-5} \cmidrule(l){6-8}
        \multirow{4}{*}{$\costMap_{\text{mean}}$}                                                                                                                                                                                                                                                      
					& $0$ 		& $44.6 \pm 0.8$ & $62.9 \pm 0.5$ & \includegraphics[width=17ex,align=c]{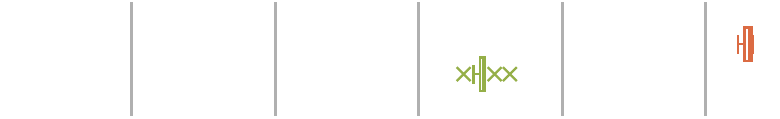}                   & $44.6 \pm 0.8$ & $62.9 \pm 0.5$ & \includegraphics[width=17ex,align=c]{figures/retrieval/cub/ablations/mean_none.pdf}                  \\[-0.6pt]
					& $0.001$	& $44.0 \pm 1.0$ & $62.9 \pm 0.2$ & \multirow{3}{*}{\includegraphics[width=17ex,align=c]{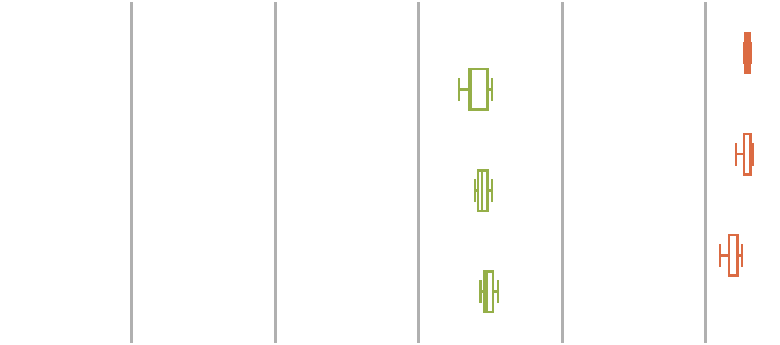}} & $44.3 \pm 0.8$ & $62.9 \pm 0.5$ & \multirow{3}{*}{\includegraphics[width=17ex,align=c]{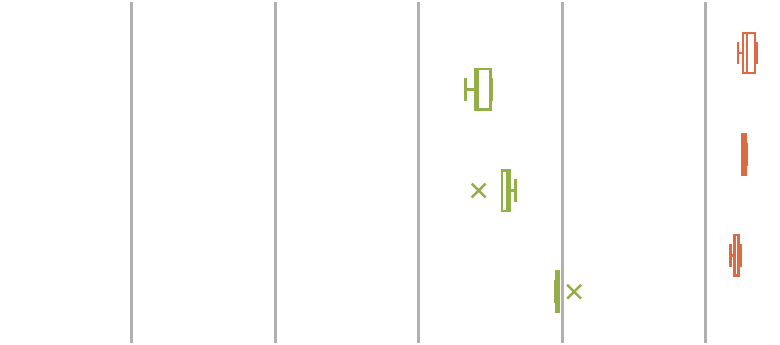}}\\
					& $0.01$ 	& $44.5 \pm 0.4$ & $62.9 \pm 0.5$ &                                                                                                       & $45.9 \pm 1.0$ & $62.7 \pm 0.2$ &                                                                                                      \\                  		
					& $0.1$ 	& $44.9 \pm 0.4$ & $61.9 \pm 0.6$ &                                                                                                       & $49.9 \pm 0.5$ & $62.1 \pm 0.3$ &                                                                                                      \\ \cmidrule(lr){2-5} \cmidrule(l){6-8}
        \multirow{4}{*}{$\costMap_{\text{norm}}$}                                                                                                                                                                                                                                                      
					& $0$ 		& $59.0 \pm 0.4$ & $60.8 \pm 0.3$ & \includegraphics[width=17ex,align=c]{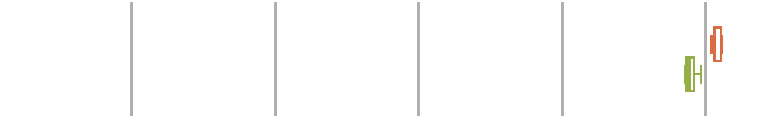}                   & $59.0 \pm 0.4$ & $60.8 \pm 0.3$ & \includegraphics[width=17ex,align=c]{figures/retrieval/cub/ablations/norm_none.pdf}                  \\[-0.6pt]
					& $0.001$	& $59.3 \pm 0.5$ & $62.6 \pm 0.5$ & \multirow{3}{*}{\includegraphics[width=17ex,align=c]{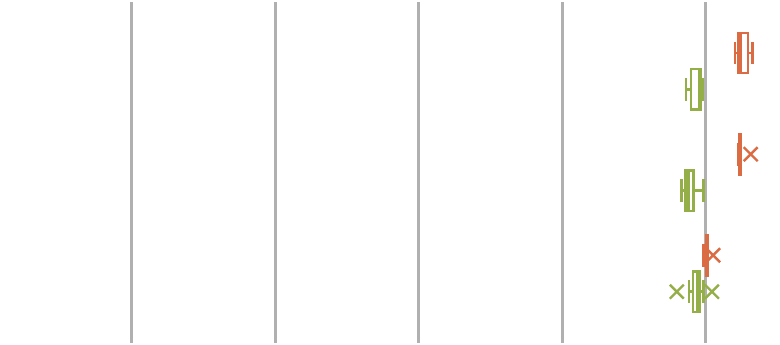}} & $58.7 \pm 0.5$ & $62.7 \pm 0.3$ & \multirow{3}{*}{\includegraphics[width=17ex,align=c]{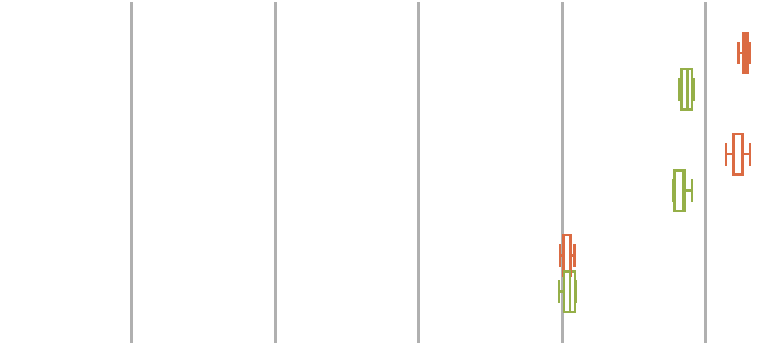}}\\
					& $0.01$ 	& $58.9 \pm 0.5$ & $62.5 \pm 0.4$ &                                                                                                       & $58.3 \pm 0.6$ & $62.3 \pm 0.6$ &                                                                                                      \\                  		
					& $0.1$ 	& $59.3 \pm 0.6$ & $60.1 \pm 0.2$ &                                                                                                       & $50.4 \pm 0.5$ & $50.4 \pm 0.4$ &                                                                                                      \\ \cmidrule(lr){2-5} \cmidrule(l){6-8}
        \multirow{4}{*}{$\costMap_{\text{std}}$}                                                                                                                                                                                                                                                       
					& $0$ 		& $60.2 \pm 0.6$ & $62.4 \pm 0.6$ & \includegraphics[width=17ex,align=c]{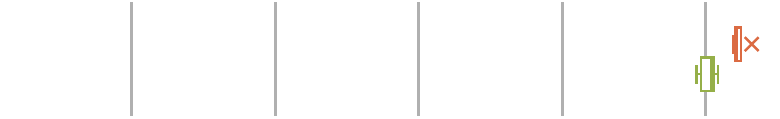}                    & $60.2 \pm 0.6$ & $62.4 \pm 0.6$ & \includegraphics[width=17ex,align=c]{figures/retrieval/cub/ablations/std_none.pdf}                   \\[-0.6pt]
					& $0.001$ & $60.0 \pm 0.6$ & $62.7 \pm 0.3$ & \multirow{3}{*}{\includegraphics[width=17ex,align=c]{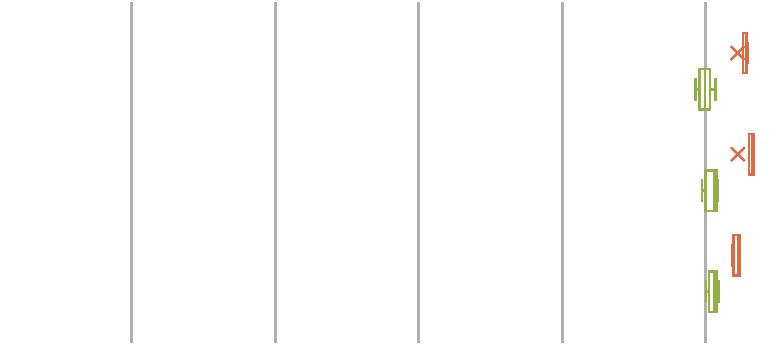}}  & $60.5 \pm 0.1$ & $63.0 \pm 0.7$ & \multirow{3}{*}{\includegraphics[width=17ex,align=c]{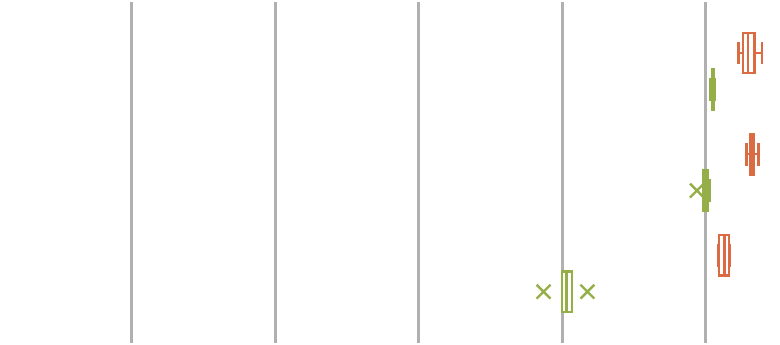}} \\
					& $0.01$ 	& $60.4 \pm 0.5$ & $63.0 \pm 0.5$ &                                                                                                       & $59.9 \pm 0.4$ & $63.2 \pm 0.3$ &                                                                                                      \\                  		
					& $0.1$ 	& $60.5 \pm 0.4$ & $62.1 \pm 0.2$ &                                                                                                       & $50.3 \pm 1.1$ & $61.3 \pm 0.4$ &                                                                                                      \\
    \bottomrule
					&
						&
							&
								& \hss\hbox to 14ex{\tiny$20$\hss $30$\hss $40$\hss $50$\hss $60$}
						&
							&
								& \hss\hbox to 14ex{\tiny$20$\hss $30$\hss $40$\hss $50$\hss $60$}
    \end{tabular}
    }
    
\end{table}

\subsection{Globe TSP} \label{app:TSP}

\paragraph{Dataset.}

The training dataset for TSP($k$) consists of $10\,000$ examples where each datapoint has a $k$ element subset sampled from 100 country flags as input and the output is the optimal TSP tour represented by an adjacency matrix. 
We consider datasets for $k=5, 10, 20$. For additional details, see \cite{VlastelicaEtal2020}.

\paragraph{Architecture.}

A set of $k$ flags is presented to a convolutional neural network that outputs $k$ 3-dimensional coordinates. These points are then projected onto a 3D unit sphere and then used to construct the $k \times k$ distance matrix which is fed to a TSP solver. Then the solver outputs the adjacency matrix indicating the edges present in the TSP tour. The loss function is an L1 loss between the predicted tour and the true tour. The neural network is expected to learn the correct coordinates of the countries' capitals on Earth, up to rotations of the sphere. We use the \texttt{gurobi} \citep{gurobi} solver for the MIP formulation of TSP.

For the Globe TSP experiment we use a convolutional neural network with 2 conv.\ layers ((channels, kernel\_size, stride) = [[20, 4, 2], [50, 4, 2]]) and 1 fully connected layer of size $500$ that predicts vector of dimension $3k$ containing the $k$ 3-dimensional representations of the respective countries’ capital cities. These representations are projected onto the unit sphere and the matrix of pairwise distances
is fed to the TSP solver. The network was trained using Adam optimizer with a learning rate $10^{-4}$ for $100$ epochs and a batch size of $50$. For \bbox, the hyper-parameter $\lambda$ was set to $20$.  

As described in \secref{sec:cost-collapse} adding noise to the cost vector can avoid cost collapse and act as a margin-inducing procedure.
We apply noise, $\xi =0.1, 0.2, 0.5$, for the first $50$ epochs (of 100 in total) to prevent cost collapse. For this dataset, it was observed that finetuning the weights after applying noise helped improve the accuracy.
Margin is only important initially as it allows not getting stuck in a local optimum around zero cost. Once avoided, margin does not play a useful role anymore because there are no large distribution shifts between the train and test set. Hence, noise was not applied for the entirety of the training phase. 
We verified the benefits of adding noise experimentally in~\tab{tab:app:tsp:noise}. 

\begin{table}[tb]
    \centering
    \begin{tabular}{llllll}\toprule
                   & $\xi=0$ & $\xi=0.1$ & $\xi=0.2$ & $\xi=0.5$\\\cmidrule(lr){2-5}
        TSP($5$) & $91.09 \pm 0.07$ & $99.67 \pm 0.10$ & $99.67 \pm 0.10$ & $99.68 \pm 0.09$\\
        TSP($10$) & $85.31 \pm 0.15$ & $99.72 \pm 0.04$ & $99.76 \pm 0.07$ & $99.76 \pm 0.04$\\
        TSP($20$) & $88.45 \pm 0.88$ & $99.78 \pm 0.04$ & $94.50 \pm 10.53$ & $99.80 \pm 0.06$\\
        \bottomrule
    \end{tabular}\vspace{.5em}
    \caption{Adding noise $\xi$ during the initial 50 epochs of training prevents cost collapse and therefore improves the test accuracy.}
    \label{tab:app:tsp:noise}
\end{table}

A difference between \bbox{} and \method{} can be observed when applying gradient noise, simulating larger architectures where the combinatorial layer is followed by further learnable layers. 
In \fig{fig:train-loss-tsp}, we present the training loss curves in this case. As we see, \bbox{} starts to diverge right after the margin-inducing noise is not added anymore
\ie after epoch $50$. However, \method{} converges as the training progresses.

\subsection{Warcraft Shortest Path} \label{app:warcraft}

\begin{wraptable}[12]{r}{.5\linewidth}
    \centering
    \caption{
        Cost ratio (suggested vs.\ true path costs) for Warcraft Shortest Path $32 \times 32$.
        \bbox and \method work similarly well.
        Normalization $\costMap_{\text{norm}}$ or noise does not affect the performance significantly.
    }
    \label{tab:warcraft-results}
    \begin{tabular}{@{}clcc@{}}
    \toprule
            &   & \multicolumn{2}{c}{Cost ratio $\times 100$ $\downarrow$}
                    \\\cmidrule(lr){3-4}
        $\costMap$
            & \multicolumn{1}{c}{$\alpha$}
                & \bbox & \method
                    \\
    \midrule
        --
            & 0
                & $ 100.9 \pm 0.1 $  &  $ 101.0 \pm 0.1 $
                    \\
        $\costMap_{\text{norm}}$
            & 0
                & $ 100.9 \pm 0.1 $  &  $ 101.2 \pm 0.1 $
                    \\
        --
            & 0.2
                & $ 101.1 \pm 0.1 $  &  $ 101.0 \pm 0.1 $
                    \\
    \bottomrule
    \end{tabular}
\end{wraptable}
We additionally present an evaluation of the Warcraft Shortest Path experiment from \citet{VlastelicaEtal2020}. Here the aim is to predict the shortest path between the top-left and bottom-right vertex in a $k \times k$ Warcraft terrain map.
The path is represented by an indicator matrix of vertices that appear along the path.

The non-negative vertex costs of the $k \times k$ grid are computed by a modified Resnet18 \citep{he2016deep} architecture using softplus on the output.
The network receives supervision in form of L1 loss between predicted and target paths.
We consider only the hardest case from the dataset with map sizes $32 \times 32$. 

Due to the non-uniqueness of solutions, we use the ratio between true and predicted path costs as an evaluation metric: ${\langle \wde, \yp \rangle}/{\langle \wde, \yd \rangle}$, with $\wde$ being the ground truth cost vector. Note that lower scores are better and $1.0$ is the perfect score.
\Tab{tab:warcraft-results} shows that \method performs comparatively to~\bbox.

We follow the same experimental design as \citet{VlastelicaEtal2020} and do the same modification to the ResNet18 architecture, except that we use \emph{softplus} to make the network output positive.
The model is trained with Adam for $50$ epochs with learning rate $5\times10^{-3}$.
The learning rate is divided by $10$ after $30$ epochs.
For the \bbox method we use $\lambda=20$.
The noise, when used, is applied for the whole duration of training.

\subsection{Runtimes}
\label{sec:runtimes}

We report the runtimes of our method and \bbox for all experiments in \Tab{tab:app:runtimes}.
Without surprise, we see the largest improvement of \method over \bbox in terms of runtime when the underlying combinatorial problem is difficult.
Finding the top-$k$ indices or the ranking of a vector is extremely simple, and therefore this procedure is typically not the computational bottleneck of the architecture.
Consequently, we do not see any difference between the methods in terms of runtime for the discrete sampling experiments (DVAE and L2X) and the retrieval experiment.
In the matching experiment, the matching procedure is makes up for a relevant portion of training, which also observed in the runtime, as \method requires less training time for the same number of training iterations.
The runtime of the quadratic assignment solver scales with the number of keypoints in the image pairs, which is between between $9$ and $30$ in our experiments, and therefore the difference in runtime between the methods will also increase for instances with more keypoints to match.
Finally, the TSP problem poses the hardest of the considered combinatorial problems, which is reflected in the almost halved runtime of the architecture employing \method compared to using \bbox.

Overall, we conclude that choosing between \method and \bbox depends both on the difficulty of the combinatorial problem, and the performance of the two methods, which is affected by the quality of the incoming gradient as discussed in \secref{sec:general_case}.

\begin{table}[t]
\centering
	
	\begin{tabular}{@{}lllllll@{}}
	\toprule
												 &         & Sampling         & \multicolumn{2}{c}{Graph Matching}    & Retrieval        & TSP              \\ \cmidrule(lr){3-3} \cmidrule(lr){4-5} \cmidrule(lr){6-6} \cmidrule(lr){7-7}
												 &         & L2X              & \voc              & \spair            & \cub             & $k=10$           \\ \midrule
	\multirow{2}{*}{train} & \method & $38.0 \pm 0.6$ & $97.0  \pm 2.5 $ & $93.5  \pm 1.9 $ & $99.7 \pm 4.0$ & $53.1 \pm 2.5$ \\
												 & \bbox   & $37.6 \pm 1.5$ & $104.4 \pm 2.2 $ & $101.7 \pm 2.0 $ & $99.5 \pm 3.4$ & $84.6 \pm 0.7$ \\ \cmidrule(l){2-7} 
	eval                   & \method & $0.3  \pm 0  $ & $7.5   \pm 0.3 $ & $7.0   \pm 0.2 $ & $87.1 \pm 3.8$ & $0.1  \pm 0  $  \\
												 & \bbox   & $0.3  \pm 0  $ & $7.5   \pm 0.3 $ & $7.3   \pm 0.2 $ & $88.0 \pm 5.5$ & $0.1  \pm 0  $  \\ \bottomrule
	\end{tabular}
	
	\caption{Runtimes (in minutes) for performed exeperiments.}
	\label{tab:app:runtimes}
	
\end{table}

\section{Method} \label{sec:proofs}

\subsection{Relation to Non-linear Implicit Differentiation}
\label{sec:nonlinear-case}
In the case of a linear program \eqref{eqn:solver}, the optimal solution is always located at one of the corners of the conex hull of the possible solutions, where the Jacobian of the optimal solution with respect to the ocost vector is zero. 
Note, that this is in contrast to the case of optimization problems with non-linear objectives over convex regions, such as OptNet \citep{amos2017optnet} and Cvxpy \citep{agrawal2019differentiable}. 
Intuitively, here the non-linearity often leads to solutions that are not located at such problematic corner points, and therefore the optimal solution usually has a continuously differentiable dependence on the objective parameters with non-zero Jacobian. 
This Jacobian can be calculated by solving the KKT system on the backward pass, and therefore no informative gradient replacement is required, given a sufficiently strong regularization through the non-linearity.

\subsection{Projection as Solver Relaxation}
\label{sec:projection-as-relaxation}

As described in \secref{sec:relaxation}, we can also view the \method method in terms of relaxations of the combinatorial solver. From this perspective, the approach is to use the unmodified discrete solver on the forward pass but differentiate through a continuous relaxation on the backward pass.
To see this connection, we modify the solver by a) relaxing the polytope $\solSpace$ to another smooth set $\widetilde{\solSpace}$ containing $\solSpace$ and b) adding a quadratic regularizer in case $\widetilde{\solSpace}$ is unbounded, \ie
\begin{equation}
	\yp = \argmin_{y\in Y\subset \widetilde{Y}} \langle \w, y\rangle
    \quad\rightarrow\quad
    \widetilde\y_{\widetilde\solSpace}(\w,\epsilon) = \argmin_{y\in \widetilde{Y}} \langle \w, y\rangle + \frac{\epsilon}{2} \|y\|^2_2.
\end{equation}
The loosest possible relaxation is to set $\widetilde{\solSpace}=\R^n$. This gives 
\begin{equation}
    \widetilde\y_{\R^n}(\w,\epsilon)=\argmin_{y\in \R^n} \langle \w, y\rangle + \frac{\epsilon}{2} \|y\|^2_2=-\frac{\w}{\epsilon}
\end{equation}
as the closed form solution of the relaxed problem, and differentiating through it corresponds to the re-scaled vanilla \method method. We can improve upon this by making the relaxation tighter. 
As an example, we can restrict the solution space to a hyperplane $H= \{\y\in\R^n|\langle a, \y \rangle = b\}$ for some unit vector $a\in\R^n$ and scalar $b\in\R$,
and setting $\widetilde{\solSpace}=H$ leads to the closed-form solution
\begin{equation}
    \widetilde\y_{H}(\w,\epsilon) = \argmin_{y\in H} \langle \w, y\rangle + \frac{\epsilon}{2} \|y\|^2_2= -\frac{1}{\epsilon}(\w-\langle \w,a\rangle a) + b = -\frac{1}{\epsilon}\costMap_\text{plane}(\w|a) + b.
\end{equation}
The Jacobian of this expression matches that of \method with  $\costMap_{\text{plane}}$ up to scale.
In practice, we also often encounter the case in which $\solSpace$ is a subset of a sphere $S=c + r\mathbb{S}_{n-1}$, for some $c\in\R^n ,r>0$. This is the case in all of our experiments, as both the binary hypercube and the permutahedron are subsets of certain spheres. Specifically, that is 
\begin{align}
    c=\frac{\sqrt{n}}{2}\mathbf{1}, \qquad
    r=\frac{\sqrt{n}}{2}
\end{align}
for the binary hypercube, and 
\begin{align}
    c=\frac{n+1}{2}\mathbf{1}, \qquad
    r=\sqrt{\frac{n(n^2-1)}{12}}
\end{align}
for the permutahedron.
Setting $\widetilde{\solSpace}=S$ gives the closed form solution 
\begin{equation}
    \widetilde\y_{S}(\w, \epsilon=0) = \argmin_{y\in S} \langle \w, y\rangle = c - r\frac{\w}{\|\w\|} = c - r\costMap_\text{norm}(\w).
\end{equation}
The Jacobian of this relaxed solution is proportional to that of \method with projection $\costMap_{\text{norm}}$.
In our experiments, we also encounter the case in which the solutions are located on the intersection of a hyperplane $H$ and a sphere $S$. We can therefore tighten the relaxation further by $\widetilde\solSpace=S\cap H$.

Assuming without loss of generality that the sphere center is located on the hyperplane, we get the closed form solution
\begin{equation}
    \widetilde\y_{S\cap H}(\w, \epsilon=0) = \argmin_{y\in S\cap H} \langle \w, y\rangle = c - r\frac{\w-\langle \w,a\rangle a}{\|\w-\langle \w,a\rangle a\|} = c - r\costMap_\text{norm}(\costMap_\text{plane}(\w|a)).
\end{equation}
In the case of $Y$ being the set of $k$-hot vectors or the permutahedron, we have $a=\mathbf{1} /\sqrt n$, which amounts to
\begin{equation}
    \widetilde\y_{S\cap H}(\w, \epsilon=0) = c - r\costMap_\text{norm}(\costMap_\text{plane}(\w|a=\frac{\mathbf{1}}{\sqrt n})) = c - r\costMap_\text{std}(\w).
\end{equation}
The Jacobian of this expression matches that of \method using $\costMap_{\text{std}}$ projection, which we derived originally by considering solver invariants in \secref{method:projection}. This further strengthens the intuitive connection between projections in cost-space and relaxations in solution-space.

This view of \method is also related to existing work on structured prediction with projection oracles \citep{blondel2019structured}, which uses (potentially non-differentiable) projections onto convex super-sets on both the forward- and backwards~pass.

\subsection{Proof of Theorem~1}

For an initial cost $\w$ and a step size $\alpha>0$, we set
\begin{equation} \label{eq:sequence-def}
    \w_0 = \w
    \quad\text{and}\quad
    \w_{k+1}=\w_{k}-\alpha\dwI_k
    \quad\text{for $k\in\N$},
\end{equation}
in which $\dwI_k$ denotes the \method update at the solution point $\y(\w_k)$, \ie 
\begin{equation}
    \dwI_k = -\dy[f]\bigl(\y(\w_k)\bigr).
\end{equation}
We shall simply write $\dy$ when no confusion is likely to happen.
Recall that the set of better solutions is defined as
\begin{equation}
    \betterSol{\y}
        = \bigl\{\y' \in \solSpace : f(\y') < f\bigl(\y\bigr) \bigr\},
\end{equation}
where $f$ is the linearization of the loss $\ell$ at the point $\y$ defined by 
\begin{equation}
    f(\y') = \ell(\y) + \Bigl\langle\y'-\y,\dy[f]\Bigr\rangle.
\end{equation}

In principle, ties in the solver may occur and hence the mapping $\w\mapsto\y(\w)$ is not well-defined for all cost $\w$ unless we specify, how the ties are resolved.
Typically, this is not an issue in most of the considerations. 
However, in our exposition, we need to avoid certain rare discrepancies.
Therefore, we assume that the solver will always favour the solution from the previous iteration if possible, \ie 
from $\langle \y(\w_k),\w_k \rangle = \bigl\langle\y(\w_{k-1}),\w_k\bigr\rangle$ it follows that $\y(\w_k)=y(\w_{k-1})$.

\begin{theorem} \label{thm:1}
Assume that $(\w_k)_{k=0}^\infty$ is the sequence as in \eqref{eq:sequence-def} for some initial cost~$\w$ and step size~$\alpha>0$. Then the following holds:
\begin{enumerate}[label={\rm(\roman*)}, parsep=0pt]
    \item Either $\bbetterSol{\yp}$ is non-empty and there is some $\alpha_{\max}>0$ such that for every $\alpha<\alpha_{\max}$ there is $n\in\N$ such that $\y(\w_n)\in\bbetterSol{\yp}$ and $\y(\w_k)=\y(\w)$ for all $k<n$,
    \item or $\bbetterSol{\yp}$ is empty and for every $\alpha$ it is $\y(\w_k)=\y(\w)$ for all $k\in\N$.
\end{enumerate}
\end{theorem}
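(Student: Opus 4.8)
The plan is to collapse the whole update sequence onto the motion along a single ray in cost space, and then to obtain both alternatives of the theorem from the sign of one inner product. Write $\y_0 = \yp$, let $g = \dy[f]\bigl(\y_0\bigr)$ be the incoming gradient there (so $\dwI_0 = -g$), and recall $f(\y') = \ell(\y_0) + \langle \y'-\y_0, g\rangle$, so $\bbetterSol{\y_0} = \{\y'\in\solSpace : \langle g, \y_0-\y'\rangle > 0\}$. First I would note that $\dwI_k = -\dy[f]\bigl(\y(\w_k)\bigr)$ depends on $\w_k$ only through $\y(\w_k)$, hence $\dwI_k = -g$ as long as $\y(\w_k) = \y_0$; an easy induction then gives $\w_k = \w + k\alpha g$ up to and including the first index where the solution changes, so it suffices to analyse $\y(\w + k\alpha g)$ until it leaves $\y_0$.

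\textbf{Step 2: the dichotomy.} For any $\y'\in\solSpace$, optimality of $\y_0$ at $\w$ gives $\langle \w, \y_0-\y'\rangle \le 0$, so the sign of $\langle \w + k\alpha g, \y_0-\y'\rangle = \langle \w, \y_0-\y'\rangle + k\alpha\langle g, \y_0-\y'\rangle$ is controlled by $\langle g, \y_0-\y'\rangle$. If $\langle g, \y_0-\y'\rangle \le 0$ for every $\y'$ --- equivalently, $\bbetterSol{\y_0}$ is empty --- then $\langle \w + k\alpha g, \y_0-\y'\rangle \le 0$ for all $k$ and all $\y'$, and I would conclude $\y(\w_k)=\y_0$ for every $k$ (and every $\alpha>0$) by induction, invoking the tie-breaking convention to keep the solver pinned to $\y_0$ whenever competitors are merely tied; this is alternative (ii). Otherwise some $\y'$ has $\langle g, \y_0-\y'\rangle > 0$, which forces $\langle \w + k\alpha g, \y_0-\y'\rangle \to +\infty$, so $\y_0$ is eventually non-optimal; since $\solSpace$ is finite there is a least index $n\ge 1$ with $\y(\w_n)\ne\y_0$, and by minimality $\y(\w_k)=\y_0$ for all $k<n$.

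\textbf{Step 3: the new solution is strictly better.} Because $\y(\w_n)\ne\y(\w_{n-1})=\y_0$, the tie-breaking convention upgrades optimality of $\y(\w_n)$ at $\w_n$ to the strict inequality $\langle \w_n, \y(\w_n)\rangle < \langle \w_n, \y_0\rangle$. Substituting $\w_n = \w + n\alpha g$ and using $\langle \w, \y_0\rangle \le \langle \w, \y(\w_n)\rangle$ then yields $n\alpha\langle g, \y_0-\y(\w_n)\rangle > 0$, hence $\langle g, \y_0-\y(\w_n)\rangle > 0$ since $n\alpha>0$; by definition of $f$ this says $f(\y(\w_n)) < f(\y_0)$, i.e. $\y(\w_n)\in\bbetterSol{\y_0}$. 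Since no upper bound on $\alpha$ enters the argument, $\alpha_{\max}$ may be taken arbitrarily large.

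\textbf{The main obstacle.} The estimates are elementary linear algebra; the delicate part will be handling the solver's tie-breaking convention consistently, because it is invoked in two opposite directions --- to keep the solver at $\y_0$ in alternative (ii) when other solutions become tied, and to promote plain optimality at step $n$ to the \emph{strict} inequality powering Step 3. A secondary point to check is that the ``first index $n$'' is well defined, which follows from finiteness of $\solSpace$ together with the existence of a strictly cheaper competitor in $\bbetterSol{\y_0}$.
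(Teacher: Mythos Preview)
Your argument is correct and follows the same route as the paper---reduce to the ray $\w + tg$, use optimality of $\y_0$ at $\w$ together with the tie-breaking convention to force strict improvement at the first change---so the core ideas coincide. The paper packages this into three propositions: Proposition~1 shows the solution must eventually change when $\bbetterSol{\yp}$ is nonempty (your divergence argument in Step~2); Proposition~2 partitions the ray into intervals $I_1,\dots,I_m$ and sets $\alpha_{\max}=|I_2|$ so that the first change lands specifically in $I_2$; Proposition~3 is exactly your Step~3, proved with the same inequalities. Your direct treatment is in fact slightly sharper: you correctly observe that no upper bound on $\alpha$ is required, because Step~3 applies to whichever $\y(\w_n)$ appears first, not only one lying in the adjacent interval $I_2$. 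The paper's $\alpha_{\max}$ is thus an artefact of its decomposition rather than a genuine constraint on the step size.
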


We prove this statement in multiple parts.

\begin{prop} \label{prop:solution-changes}
Let $\alpha>0$ and $(\w_k)_{k=0}^\infty$ be as in \eqref{eq:sequence-def}.
If $\bbetterSol{\yp}$ is non-empty, then there exists $n\in\N$ such that $\y(\w_{n})\neq\yp$.
\end{prop}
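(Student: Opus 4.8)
The plan is to argue by contradiction. Suppose the conclusion fails, so that $\y(\w_k)=\yp$ for every $k\in\N$. Then the incoming gradient never moves: with $g:=\dy[f]\bigl(\yp\bigr)$ we get $\dwI_k=-g$ for all $k$, and \eqref{eq:sequence-def} shows the iterates trace out the ray $\w_k=\w_0+k\alpha g$ in cost space.

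I would then bring in the hypothesis that $\bbetterSol{\yp}$ is non-empty by fixing some $\y'\in\bbetterSol{\yp}$. By definition of the linearized loss $f$, this is precisely the statement $\langle \y'-\yp,\,g\rangle = f(\y')-f(\yp) < 0$. On the other hand, the contradiction hypothesis $\y(\w_k)=\yp$ forces $\yp$ to be a genuine minimizer of $\langle \w_k,\cdot\rangle$ over $\solSpace$ (the tie-breaking convention only selects \emph{among} minimizers, so this holds unconditionally), whence
\begin{equation*}
    0 \;\le\; \langle \w_k,\,\y'-\yp\rangle \;=\; \langle \w_0,\,\y'-\yp\rangle + k\alpha\,\langle g,\,\y'-\yp\rangle
    \qquad\text{for all }k\in\N .
\end{equation*}
Since $\alpha>0$ and $\langle g,\,\y'-\yp\rangle<0$, the right-hand side diverges to $-\infty$ as $k\to\infty$, contradicting the lower bound $0$. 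Hence there must be some $n$ with $\y(\w_n)\neq\yp$.

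This is a short linear argument, so I do not expect a genuine obstacle. The two points that need a moment of care are getting the sign of the \method update right — so that $\w_k$ drifts along $+g$ rather than $-g$ — and observing that the solver's tie-breaking rule is inert here: under the contradiction hypothesis $\yp$ is already an honest minimizer at every $\w_k$, which is all the optimality inequality uses. Note also that finiteness of $\solSpace$ plays no role in this step; it will become relevant only later, when one upgrades ``the solution changes'' to ``a solution in $\bbetterSol{\yp}$ is reached''.
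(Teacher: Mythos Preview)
Your proof is correct and follows essentially the same approach as the paper's: assume by contradiction that the solution never changes, so the iterates trace the ray $\w_0+k\alpha g$; pick $\y'\in\bbetterSol{\yp}$, use $\langle \y'-\yp,g\rangle<0$ together with the optimality of $\yp$ at each $\w_k$, and obtain a contradiction as the linear term drives the inner product to $-\infty$. Your additional remarks on the tie-breaking rule being inert here and on finiteness of $\solSpace$ playing no role are accurate and not stated in the paper's proof.
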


\begin{proof}
We proceed by contradiction.
Assume that $\bbetterSol{\yp}$ is non-empty and $\y(\w_{k})=\yp$ for all $k\in\N$.
    
Take any $\yd\in\bbetterSol{\yp}$. By definition of $\bbetterSol{\yp}$, we have that
\begin{equation}
    \xi = \Bigl\langle \yd - \yp, \dy[f] \Bigr\rangle < 0.
\end{equation}
As $\y(\w_k)=\yp$ for all $k\in\N$, it is
\begin{equation}
    \w_k
        = \w - k\alpha\dwI
        = \w + k\alpha\dy[f]
\end{equation}
and therefore
\begin{equation}
    \langle \yd - \yp, \w_k \rangle
    = \langle \yd - \yp, \w \rangle + k\alpha\Bigl\langle \yd - \yp, \dy[f] \Bigr\rangle
    = \langle \yd - \yp, \w \rangle + k\alpha\xi.
\end{equation}
Since $\xi<0$, the latter term tends to minus infinity.
Consequently, there exists some $n\in\N$ for which
\begin{equation}
    \langle \yd, \w_n \rangle < \langle \yp, \w_n \rangle
\end{equation}
contradicting the fact that $\y(\w)$ is the minimizer for $\w_n$.
\end{proof}

Let us now make a simple auxiliary observation about $\argmin$ solvers.
The mapping
\begin{equation} \label{eq:solver-def}
    \w\mapsto \y(\w)=\argmin_{\y\in\solSpace} \langle\w,\y\rangle
\end{equation}
is a piecewise constant function and hence induces a partitioning of its domain $\costSpace$ into non-overlapping sets on which the solver is constant.
Let us denote the pieces by
\begin{equation} \label{eq:Wy-def}
    \costSpace_\y=\{\w\in\costSpace:\y(\w)=\y\}
    \quad\text{for $\y\in\solSpace$}.
\end{equation}
We claim that $\costSpace_\y$ is a convex cone.
Indeed, if $\w\in\costSpace_\y$, clearly $\lambda\w\in\costSpace_\y$ for any $\lambda>0$.
Next, if $\w_1,\w_2\in\costSpace_\y$ and $\lambda\in(0,1)$, then $\y(\lambda\w_1)=\y$ and $\y\bigl((1-\lambda)\w_2\bigr)=\y$ and hence $\y$ is also a minimizer for $\lambda\w_1+(1-\lambda)\w_2$.

\begin{prop} \label{prop:nbh}
Assume that $(\w_k)_{k=0}^\infty$ is the sequence as in \eqref{eq:sequence-def} for some initial cost~$\w$ and step size~$\alpha$.
Then either for every $\alpha$ it is $\y(\w_k)=\y(\w)$ for all $k\in\N$,
or there is some $\alpha_{\max}>0$ such that for every $\alpha<\alpha_{\max}$ there is $n\in\N$ such that $\y(\w_k)=\yp$ for all $k<n$ and $\y(\w_n)\neq \yp$.
\end{prop}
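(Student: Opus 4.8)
The plan is to dichotomize on whether the linearized target set $\bbetterSol{\yp}$ is empty; the non-empty branch is handled almost entirely by \Prop{prop:solution-changes}, while the empty branch requires showing that the solver output can never leave $\yp$, regardless of the step size. Since $\bbetterSol{\yp}$ is either empty or not, the two branches are exhaustive, and as we will see they are in fact mutually exclusive, so they map cleanly onto the two alternatives in the statement.

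\textbf{Empty branch.} Assume $\bbetterSol{\yp}=\emptyset$. Unwinding the definition of the linearization $f$, this says exactly that $\langle\y'-\yp,\dy[f]\rangle\ge 0$ for every $\y'\in\solSpace$. I would then prove by induction on $k$ that $\y(\w_k)=\yp$ for \emph{every} $\alpha>0$. The base case is $\w_0=\w$. For the step, if $\y(\w_j)=\yp$ for all $j\le k$ then every \method update so far equals $-\dy[f]$ evaluated at $\yp$, so $\w_{k+1}=\w+(k+1)\alpha\dy[f]$; adding the inequality above to $\langle\y'-\yp,\w\rangle\ge 0$ (optimality of $\yp$ for $\w$) yields $\langle\y'-\yp,\w_{k+1}\rangle\ge 0$ for all $\y'$, i.e.\ $\yp$ is still a minimizer for $\w_{k+1}$. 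Since $\yp=\y(\w_k)$ attains that minimum, the tie-breaking convention forces $\y(\w_{k+1})=\yp$. This establishes the first alternative.

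\textbf{Non-empty branch.} Assume $\bbetterSol{\yp}\neq\emptyset$. For each fixed $\alpha>0$, \Prop{prop:solution-changes} gives an index at which the solver output differs from $\yp$; let $n=n(\alpha)$ be the least such index. Then $n\ge 1$ since $\y(\w_0)=\yp$, and by minimality $\y(\w_k)=\yp$ for all $k<n$ while $\y(\w_n)\neq\yp$. As this holds for all $\alpha$, the threshold in the statement can be taken to be any positive number (in fact $\alpha_{\max}=+\infty$), which gives the second alternative.

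The one delicate step is the tie-breaking argument in the empty branch: it is not enough that $\yp$ stays \emph{a} minimizer of the shifted cost — one must conclude it is the \emph{selected} one, which is precisely where the convention stated just before the theorem (the solver keeps the previous iterate's solution when it remains optimal) is used; everything else is bookkeeping. Note also that smallness of $\alpha$ plays no role here: it is needed only in the subsequent step that strengthens ``$\y(\w_n)\neq\yp$'' to ``$\y(\w_n)\in\bbetterSol{\yp}$''.
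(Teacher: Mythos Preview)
Your proof is correct and takes a genuinely different route from the paper. The paper argues geometrically: it parametrizes the continuous ray $w(\alpha)=\w+\alpha\,\dy$, observes that the solver output $\gamma(\alpha)=\y(w(\alpha))$ is piecewise constant, and partitions $[0,\infty)$ into finitely many intervals $I_1,\ldots,I_m$ arising as intersections of the line with the convex cones $\costSpace_\y$. The dichotomy is then $m=1$ versus $m\ge2$, and in the latter case the paper sets $\alpha_{\max}=|I_2|$ so that the first discrete step leaving $I_1$ cannot skip over $I_2$. You instead dichotomize on whether $\bbetterSol{\yp}$ is empty, handle the empty branch by a direct induction plus the tie-breaking convention, and delegate the non-empty branch entirely to \Prop{prop:solution-changes}. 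This is shorter and, as you note, yields the stronger conclusion that in the second alternative \emph{every} $\alpha>0$ works; the bound $\alpha_{\max}=|I_2|$ in the paper is an artifact of wanting the first switch to hit a specific interval, which the proposition as stated does not require. The paper's approach, by contrast, is self-contained (it does not invoke \Prop{prop:solution-changes}) and makes the convex-cone geometry of the cost partition explicit. One small correction to your closing remark: smallness of $\alpha$ is not needed in \Prop{prop:loss-decrease} either---its proof goes through verbatim for arbitrary $\alpha>0$---so your argument in fact shows that the $\alpha_{\max}$ restriction can be removed from Theorem~\ref{thm:1} as well.
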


\begin{proof}
Let us define $w\colon[0,\infty)\to\costSpace$ and $\gamma\colon[0,\infty)\to\solSpace$ as
\begin{equation}
    w(\alpha)= \w-\alpha\dwI
    \quad\text{and}\quad
    \gamma(\alpha) = \y\bigl(w(\alpha)\bigr)
    \quad\text{for $\w\in\costSpace$},
\end{equation}
respectively.
As $\gamma$ is a composition of an affine function $w$ and a piecewise constant solver, it is itself a piecewise constant function.
Therefore, $\gamma$ induces a partitioning of its domain $[0,\infty)$ into disjoint sets on which $\gamma$ is constant.
In fact, these sets are intervals, say $I_1,\ldots,I_m$, as intersections of the line segment $\{w(\alpha):\alpha>0\}$ and convex cones $\costSpace_\y$. Consequently, $m\le|\solSpace|$.

Now, If $m=1$, then $I_1=[0,\infty)$ and $\y(\w_k)$ is constant $\y(\w)$ for all $k\in\N$ whatever $\alpha>0$ is.
In the rest of the proof, we assume that $m\ge 2$.
Assume that the intervals $I_1,\ldots,I_m$ are labeled along increasing $\alpha$, \ie if $\alpha_1\in I_i$ and $\alpha_2\in I_j$ then $\alpha_1<\alpha_2$ if and only if $i<j$.

We define the upper bound on the step size to $\alpha_{\max}=|I_2|$.
Assume that $\alpha<\alpha_{\max}$ and $(\w_k)_{k=1}^\infty$ is given.
Let $n=\min\{k\in\N:w(\alpha k)\in I_2\}$, \ie the first index when the sequence $\y(\w_k)$ switches. Clearly $\y(\w_k)=\gamma(\alpha k)=\y(\w)$ for $k=0,\ldots,n-1$
and $\y(\w_n)=\gamma(\alpha n)\neq\y(\w)$.
\end{proof}

\begin{prop} \label{prop:loss-decrease}
Let $\alpha>0$ and $(\w_k)_{k=0}^\infty$ be as in \eqref{eq:sequence-def}.
Assume that $\y(\w_k)=\yp$ for all $k<n$ and $\y(\w_n)\neq \yp$.
Also assume that from $\langle \y(w_k),\w_k \rangle = \bigl\langle\y(\w_{k-1}),\w_k\bigr\rangle$ it follows that $\y(w_k)=y(\w_{k-1})$.
Then
\begin{equation} \label{eq:f-ineq}
    f\bigl(\y(\w_{n})\bigr) < f\bigl(\yp\bigr),
\end{equation}
where $f$ is the linerarized loss at $\y(\w)$.
\end{prop}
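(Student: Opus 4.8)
The plan is to use that the solver output stays frozen at $\yp=\y(\w)$ for the first $n$ iterates, so the cost moves affinely in one fixed direction, and then to pit the optimality of $\yp$ at $\w$ against the \emph{strict} optimality of $\y(\w_n)$ at $\w_n$ — the strictness being exactly what the tie-breaking convention buys us.

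First I would observe that for $k=0,\dots,n-1$ we have $\y(\w_k)=\yp$, so the \method update is the constant vector $\dwI_k=-\dy[f]$, where $\dy[f]$ denotes the linearization gradient at $\yp$. Unrolling the recursion \eqref{eq:sequence-def} gives $\w_k=\w-k\alpha\dwI=\w+k\alpha\dy[f]$ for every $k\le n$, in particular $\w_n=\w+n\alpha\dy[f]$. Now set $\yd=\y(\w_n)$, which by hypothesis satisfies $\yd\ne\yp$. Being a minimizer of $\langle \cdot,\w_n\rangle$ over $\solSpace$, it obeys $\langle\yd,\w_n\rangle\le\langle\yp,\w_n\rangle$. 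The one genuinely delicate point is to upgrade this to a strict inequality: if $\langle\yd,\w_n\rangle=\langle\yp,\w_n\rangle$ held, then since $\y(\w_{n-1})=\yp$ the tie-breaking assumption (applied at index $k=n$) would force $\y(\w_n)=\yp$, contradicting $\yd\ne\yp$. Hence $\langle\yd-\yp,\w_n\rangle<0$. Separately, $\yp=\y(\w)$ minimizes $\langle \cdot,\w\rangle$, so $\langle\yd-\yp,\w\rangle\ge0$.

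Finally I would substitute $\w_n=\w+n\alpha\dy[f]$ into the strict inequality and isolate the gradient term:
\[
0 > \langle\yd-\yp,\w_n\rangle = \langle\yd-\yp,\w\rangle + n\alpha\,\langle\yd-\yp,\dy[f]\rangle \ge n\alpha\,\langle\yd-\yp,\dy[f]\rangle ,
\]
and since $n\alpha>0$ this yields $\langle\yd-\yp,\dy[f]\rangle<0$. By the definition of the linearized loss at $\y(\w)=\yp$, namely $f(\y')=\ell(\yp)+\langle\y'-\yp,\dy[f]\rangle$, we get $f(\yd)-f(\yp)=\langle\yd-\yp,\dy[f]\rangle<0$, which is exactly \eqref{eq:f-ineq}. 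The main (indeed only) obstacle is the strictness step: without the tie-breaking convention one would recover only $f(\y(\w_n))\le f(\yp)$, too weak to place $\y(\w_n)$ in the strictly better set $\bbetterSol{\yp}$ as Theorem~\ref{thm:1} requires; everything else is unrolling the recursion plus two applications of optimality.
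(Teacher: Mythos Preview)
Your proof is correct and follows essentially the same approach as the paper: unroll the recursion to $\w_n=\w+n\alpha\,\dy[f]$, combine optimality of $\yp$ at $\w$ with optimality of $\y(\w_n)$ at $\w_n$, and invoke the tie-breaking rule for strictness. The only cosmetic difference is that you upgrade to strict inequality at the $\w_n$-optimality step and then carry it through, whereas the paper first derives the weak inequality $f(\y(\w_n))\le f(\yp)$ and argues strictness at the end by tracing equality back to a tie at $\w_n$.
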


\begin{proof}
As $\y(\w_k)=\yp$ for all $k<n$, it is
\begin{equation}
    \w_k
        = \w - n\alpha\dwI
        = \w + n\alpha\dy[f].
\end{equation}
Therefore, as $\y(\w_{n})$ is the minimizer for the cost $\w_k$, we have
\begin{equation}
    0 \le \bigl\langle \yp - \y(\w_{n}), \w_{n} \bigr\rangle
        = \bigl\langle \yp - \y(\w_{n}), \w \bigr\rangle
            + n\alpha \Bigl\langle \yp - \y(\w_{n}), \dy[f] \Bigr\rangle.
\end{equation}
Now, since $\yp$ is the minimizer for $\w$, it is $\bigl\langle\yp-\y(\w_{n}),\w\bigr\rangle\le0$
and therefore
\begin{equation}
    0 \le \Bigl\langle \yp - \y(\w_{n}), \dy[f] \Bigr\rangle.
\end{equation}
Consequently,
\begin{equation}\label{eq:loss-diff}
    f\bigl(\y(\w_{n})\bigr)
        = \ell\bigl(\yp\bigr) + \Bigl\langle \y(\w_{n}) - \yp, \dy[f] \Bigr\rangle
        \leq \ell\bigl(\yp\bigr) = f\bigl(\yp\bigr).
\end{equation}
Equality is attained only if
\begin{equation}
     \bigl\langle \y(\w_{n}) - \y(\w_{n-1}), \w_n \bigr\rangle = 0.
\end{equation}
This together with $\y(\w_n)\neq \yp=\y(\w_{n-1})$ violates the assumption that the solver ties are broken in favour of the previously attained solution, therefore the inequality in \eqref{eq:loss-diff} is strict.
\end{proof}

\begin{proof}[Proof of Theorem \ref{thm:1}]
Assume $\bbetterSol{\yp}$ is non-empty. It follows from \Prop{prop:solution-changes} that there exists $m\in\N$ such that $\y(\w_{m})\neq\yp$.
It follows from \Prop{prop:nbh} that there is some $\alpha_{\max}>0$ such that for every $\alpha<\alpha_{\max}$ there is $n\in\N$ such that $\y(\w_k)=\yp$ for all $k<n$ and $\y(\w_n)\neq \yp$.
From \Prop{prop:loss-decrease} it also follows that $f\bigl(\y(\w_{n})\bigr) < f\bigl(\yp\bigr)$. Combining these results we have $\y(\w_n)\in\bbetterSol{\yp}$ by definition.
This proves the first part of the theorem.

We prove the second part of the theorem by contradiction. 
Assume that $\bbetterSol{\yp}$ is empty and there exists some $\alpha$ such that $\y(\w_k)=\y(\w)$ for all $k < n\in\N$ and $\y(\w_n)\neq\yp$.
From \Prop{prop:loss-decrease} we know that $f\bigl(\y(\w_{n})\bigr) < f\bigl(\yp\bigr)$ and therefore $\y(\w_n)\in\bbetterSol{\yp}$.
This is in contradiction to $\bbetterSol{\yp}$ being empty.
\end{proof}

\section{Additional Illustrations}
\label{sec:illustrations}
In addition to the illustrations for the 'simple' case in \fig{fig:intuitive-updates}, we provide additional illustrations for the more challenging case in \fig{fig:intuitive-updates=general}, in which the incoming gradient does not point towards an attainable solution. The illustrations compare the resulting update directions for \method without projection, \method with $P_\text{std}$ projection, and Blackbox Backpropagation.

\begin{figure}
    \centering
    \begin{subfigure}[t]{\linewidth}
    \centering
    \includegraphics[width=0.8\linewidth]{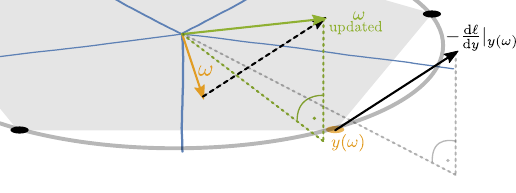}
		\captionsetup{width=0.85\textwidth}
    \caption{Id update without projection: A large part of the resulting update does not affect the optimal solution. Note that the illustration shows a two-dimensional sphere embedded in a third dimension, with the incoming gradient pointing into the third dimension.}
    \label{fig:intuitive-updates-general-id-vanilla}
    \end{subfigure}
		\vskip\bigskipamount
    \begin{subfigure}[t]{\linewidth}
    \centering
    \includegraphics[width=0.8\linewidth]{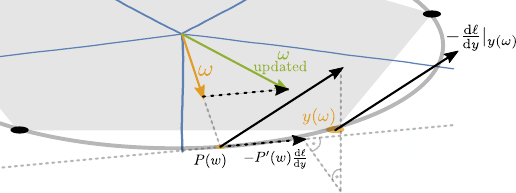}
		\captionsetup{width=0.85\textwidth}
    \caption{Id update with projection $\costMap_{\text{std}}$: The invariant part of the update is removed by differentiating through the projection. It has the effect of first projecting onto the hyperplane, and then projecting onto the tangent of the sphere.}
    \label{fig:intuitive-updates-general-id-proj}
    \end{subfigure}
		\vskip\bigskipamount
    \begin{subfigure}[t]{\linewidth}
    \centering
    \includegraphics[width=0.8\linewidth]{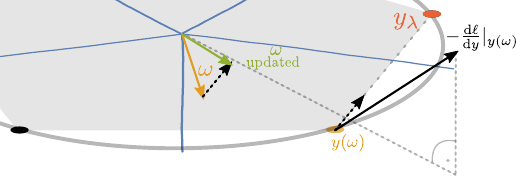}
		\captionsetup{width=0.85\textwidth}
    \caption{\bbox update: The invariant part of the update is also removed, at the cost of an additional call to the solver. Shown is an update resulting from an appropriate choice of $\lambda$, which is in general not available a priori.}
    \label{fig:intuitive-updates-general-bb}
    \end{subfigure}
    \caption{Intuitive illustration of the \method (Id) gradient and Blackbox Backpropagation (\bbox) gradient when $-\dy$ does not point directly to a target. Illustrated is the case of differentiating an argmax problem.
    The cost and solution spaces are overlayed; the cost space partitions resulting in the same solution are drawn~in~blue. Note that the direct updates to the cost vector are only of illustrative nature, as the final updates are typically applied to the weights of the backbone.}
    \label{fig:intuitive-updates=general}
\end{figure}


\begin{thebibliography}{56}
\providecommand{\natexlab}[1]{#1}
\providecommand{\url}[1]{\texttt{#1}}
\expandafter\ifx\csname urlstyle\endcsname\relax
  \providecommand{\doi}[1]{doi: #1}\else
  \providecommand{\doi}{doi: \begingroup \urlstyle{rm}\Url}\fi

\bibitem[Vlastelica et~al.(2020)Vlastelica, Paulus, Musil, Martius, and
  Rol{\'{\i}}nek]{VlastelicaEtal2020}
Marin Vlastelica, Anselm Paulus, V{\'{\i}}t Musil, Georg Martius, and Michal
  Rol{\'{\i}}nek.
\newblock Differentiation of blackbox combinatorial solvers.
\newblock In \emph{8th International Conference on Learning Representations,
  {ICLR} 2020, Addis Ababa, Ethiopia, April 26-30, 2020}. OpenReview.net, 2020.
\newblock URL \url{https://openreview.net/forum?id=BkevoJSYPB}.

\bibitem[Berthet et~al.(2020)Berthet, Blondel, Teboul, Cuturi, Vert, and
  Bach]{berthet2020learning}
Quentin Berthet, Mathieu Blondel, Olivier Teboul, Marco Cuturi, Jean-Philippe
  Vert, and Francis Bach.
\newblock Learning with differentiable pertubed optimizers.
\newblock In H.~Larochelle, M.~Ranzato, R.~Hadsell, M.F. Balcan, and H.~Lin,
  editors, \emph{Advances in Neural Information Processing Systems}, volume~33,
  pages 9508--9519. Curran Associates, Inc., 2020.
\newblock URL
  \url{https://proceedings.neurips.cc/paper/2020/file/6bb56208f672af0dd65451f869fedfd9-Paper.pdf}.

\bibitem[Rol{\'{\i}}nek et~al.(2020{\natexlab{a}})Rol{\'{\i}}nek, Musil,
  Paulus, P., Michaelis, and Martius]{rolinek2020cvpr}
Michal Rol{\'{\i}}nek, V{\'{\i}}t Musil, Anselm Paulus, Marin~Vlastelica P.,
  Claudio Michaelis, and Georg Martius.
\newblock Optimizing rank-based metrics with blackbox differentiation.
\newblock In \emph{2020 {IEEE/CVF} Conference on Computer Vision and Pattern
  Recognition, {CVPR} 2020, Seattle, WA, USA, June 13-19, 2020}, pages
  7617--7627. Computer Vision Foundation / {IEEE}, 2020{\natexlab{a}}.
\newblock \doi{10.1109/CVPR42600.2020.00764}.
\newblock URL
  \url{https://openaccess.thecvf.com/content\_CVPR\_2020/html/Rolinek\_Optimizing\_Rank-Based\_Metrics\_With\_Blackbox\_Differentiation\_CVPR\_2020\_paper.html}.

\bibitem[Rol{\'{\i}}nek et~al.(2020{\natexlab{b}})Rol{\'{\i}}nek, Swoboda,
  Zietlow, Paulus, Musil, and Martius]{rolinek2020deep}
Michal Rol{\'{\i}}nek, Paul Swoboda, Dominik Zietlow, Anselm Paulus, V{\'{\i}}t
  Musil, and Georg Martius.
\newblock Deep graph matching via blackbox differentiation of combinatorial
  solvers.
\newblock In Andrea Vedaldi, Horst Bischof, Thomas Brox, and Jan{-}Michael
  Frahm, editors, \emph{Computer Vision - {ECCV} 2020 - 16th European
  Conference, Glasgow, UK, August 23-28, 2020, Proceedings, Part {XXVIII}},
  volume 12373 of \emph{Lecture Notes in Computer Science}, pages 407--424.
  Springer, 2020{\natexlab{b}}.
\newblock \doi{10.1007/978-3-030-58604-1\_25}.
\newblock URL \url{https://doi.org/10.1007/978-3-030-58604-1\_25}.

\bibitem[Paulus et~al.(2021)Paulus, Rol{\'{\i}}nek, Musil, Amos, and
  Martius]{paulus2021:comboptnet}
Anselm Paulus, Michal Rol{\'{\i}}nek, V{\'{\i}}t Musil, Brandon Amos, and Georg
  Martius.
\newblock Comboptnet: Fit the right np-hard problem by learning integer
  programming constraints.
\newblock In Marina Meila and Tong Zhang, editors, \emph{Proceedings of the
  38th International Conference on Machine Learning, {ICML} 2021, 18-24 July
  2021, Virtual Event}, volume 139 of \emph{Proceedings of Machine Learning
  Research}, pages 8443--8453. {PMLR}, 2021.
\newblock URL \url{http://proceedings.mlr.press/v139/paulus21a.html}.

\bibitem[Amos and Kolter(2017)]{amos2017optnet}
Brandon Amos and J.~Zico Kolter.
\newblock Optnet: Differentiable optimization as a layer in neural networks.
\newblock In Doina Precup and Yee~Whye Teh, editors, \emph{Proceedings of the
  34th International Conference on Machine Learning, {ICML} 2017, Sydney, NSW,
  Australia, 6-11 August 2017}, volume~70 of \emph{Proceedings of Machine
  Learning Research}, pages 136--145. {PMLR}, 2017.
\newblock URL \url{http://proceedings.mlr.press/v70/amos17a.html}.

\bibitem[Wang et~al.(2019)Wang, Donti, Wilder, and Kolter]{wang2019satnet}
Po{-}Wei Wang, Priya~L. Donti, Bryan Wilder, and J.~Zico Kolter.
\newblock Satnet: Bridging deep learning and logical reasoning using a
  differentiable satisfiability solver.
\newblock In Kamalika Chaudhuri and Ruslan Salakhutdinov, editors,
  \emph{Proceedings of the 36th International Conference on Machine Learning,
  {ICML} 2019, 9-15 June 2019, Long Beach, California, {USA}}, volume~97 of
  \emph{Proceedings of Machine Learning Research}, pages 6545--6554. {PMLR},
  2019.
\newblock URL \url{http://proceedings.mlr.press/v97/wang19e.html}.

\bibitem[Ferber et~al.(2020)Ferber, Wilder, Dilkina, and
  Tambe]{ferber2020mipaal}
Aaron~M. Ferber, Bryan Wilder, Bistra Dilkina, and Milind Tambe.
\newblock Mipaal: Mixed integer program as a layer.
\newblock In \emph{The Thirty-Fourth {AAAI} Conference on Artificial
  Intelligence, {AAAI} 2020, The Thirty-Second Innovative Applications of
  Artificial Intelligence Conference, {IAAI} 2020, The Tenth {AAAI} Symposium
  on Educational Advances in Artificial Intelligence, {EAAI} 2020, New York,
  NY, USA, February 7-12, 2020}, pages 1504--1511. {AAAI} Press, 2020.
\newblock URL \url{https://ojs.aaai.org/index.php/AAAI/article/view/5509}.

\bibitem[P. et~al.(2021)P., Rol{\'{\i}}nek, and
  Martius]{VlastelicaEtal2021:NeuroAlgorithmic}
Marin~Vlastelica P., Michal Rol{\'{\i}}nek, and Georg Martius.
\newblock Neuro-algorithmic policies enable fast combinatorial generalization.
\newblock In Marina Meila and Tong Zhang, editors, \emph{Proceedings of the
  38th International Conference on Machine Learning, {ICML} 2021, 18-24 July
  2021, Virtual Event}, volume 139 of \emph{Proceedings of Machine Learning
  Research}, pages 10575--10585. {PMLR}, 2021.
\newblock URL \url{http://proceedings.mlr.press/v139/vlastelica21a.html}.

\bibitem[Finn et~al.(2017)Finn, Abbeel, and Levine]{finn2017model}
Chelsea Finn, Pieter Abbeel, and Sergey Levine.
\newblock Model-agnostic meta-learning for fast adaptation of deep networks.
\newblock In Doina Precup and Yee~Whye Teh, editors, \emph{Proceedings of the
  34th International Conference on Machine Learning, {ICML} 2017, Sydney, NSW,
  Australia, 6-11 August 2017}, volume~70 of \emph{Proceedings of Machine
  Learning Research}, pages 1126--1135. {PMLR}, 2017.
\newblock URL \url{http://proceedings.mlr.press/v70/finn17a.html}.

\bibitem[Raghu et~al.(2020)Raghu, Raghu, Bengio, and Vinyals]{raghu2019rapid}
Aniruddh Raghu, Maithra Raghu, Samy Bengio, and Oriol Vinyals.
\newblock Rapid learning or feature reuse? towards understanding the
  effectiveness of {MAML}.
\newblock In \emph{8th International Conference on Learning Representations,
  {ICLR} 2020, Addis Ababa, Ethiopia, April 26-30, 2020}. OpenReview.net, 2020.
\newblock URL \url{https://openreview.net/forum?id=rkgMkCEtPB}.

\bibitem[Agrawal et~al.(2019{\natexlab{a}})Agrawal, Amos, Barratt, Boyd,
  Diamond, and Kolter]{agrawal2019differentiable}
Akshay Agrawal, Brandon Amos, Shane~T. Barratt, Stephen~P. Boyd, Steven
  Diamond, and J.~Zico Kolter.
\newblock Differentiable convex optimization layers.
\newblock In Hanna~M. Wallach, Hugo Larochelle, Alina Beygelzimer, Florence
  d'Alch{\'{e}}{-}Buc, Emily~B. Fox, and Roman Garnett, editors, \emph{Advances
  in Neural Information Processing Systems 32: Annual Conference on Neural
  Information Processing Systems 2019, NeurIPS 2019, December 8-14, 2019,
  Vancouver, BC, Canada}, pages 9558--9570, 2019{\natexlab{a}}.
\newblock URL
  \url{https://proceedings.neurips.cc/paper/2019/hash/9ce3c52fc54362e22053399d3181c638-Abstract.html}.

\bibitem[Lee et~al.(2019)Lee, Maji, Ravichandran, and Soatto]{lee2019meta}
Kwonjoon Lee, Subhransu Maji, Avinash Ravichandran, and Stefano Soatto.
\newblock Meta-learning with differentiable convex optimization.
\newblock In \emph{{IEEE} Conference on Computer Vision and Pattern
  Recognition, {CVPR} 2019, Long Beach, CA, USA, June 16-20, 2019}, pages
  10657--10665. Computer Vision Foundation / {IEEE}, 2019.
\newblock \doi{10.1109/CVPR.2019.01091}.
\newblock URL
  \url{http://openaccess.thecvf.com/content\_CVPR\_2019/html/Lee\_Meta-Learning\_With\_Differentiable\_Convex\_Optimization\_CVPR\_2019\_paper.html}.

\bibitem[Agrawal et~al.(2019{\natexlab{b}})Agrawal, Barratt, Boyd, Busseti, and
  Moursi]{agrawal2019differentiating}
Akshay Agrawal, Shane Barratt, Stephen Boyd, Enzo Busseti, and Walaa~M Moursi.
\newblock Differentiating through a cone program.
\newblock \emph{J. Appl. Numer. Optim}, 1\penalty0 (2):\penalty0 107--115,
  2019{\natexlab{b}}.
\newblock URL \url{https://doi.org/10.23952/jano.1.2019.2.02}.

\bibitem[Domke(2012)]{domke2012generic}
Justin Domke.
\newblock Generic methods for optimization-based modeling.
\newblock In Neil~D. Lawrence and Mark~A. Girolami, editors, \emph{Proceedings
  of the Fifteenth International Conference on Artificial Intelligence and
  Statistics, {AISTATS} 2012, La Palma, Canary Islands, Spain, April 21-23,
  2012}, volume~22 of \emph{{JMLR} Proceedings}, pages 318--326. JMLR.org,
  2012.
\newblock URL \url{http://proceedings.mlr.press/v22/domke12.html}.

\bibitem[Elmachtoub and Grigas(2022)]{predict-optimize}
Adam~N. Elmachtoub and Paul Grigas.
\newblock Smart ``predict, then optimize''.
\newblock \emph{Manag. Sci.}, 68\penalty0 (1):\penalty0 9--26, 2022.
\newblock \doi{10.1287/mnsc.2020.3922}.
\newblock URL \url{https://doi.org/10.1287/mnsc.2020.3922}.

\bibitem[Niepert et~al.(2021)Niepert, Minervini, and
  Franceschi]{niepert2021implicit}
Mathias Niepert, Pasquale Minervini, and Luca Franceschi.
\newblock Implicit {MLE:} backpropagating through discrete exponential family
  distributions.
\newblock In Marc'Aurelio Ranzato, Alina Beygelzimer, Yann~N. Dauphin, Percy
  Liang, and Jennifer~Wortman Vaughan, editors, \emph{Advances in Neural
  Information Processing Systems 34: Annual Conference on Neural Information
  Processing Systems 2021, NeurIPS 2021, December 6-14, 2021, virtual}, pages
  14567--14579, 2021.
\newblock URL
  \url{https://proceedings.neurips.cc/paper/2021/hash/7a430339c10c642c4b2251756fd1b484-Abstract.html}.

\bibitem[Amos et~al.(2019)Amos, Koltun, and Kolter]{amos2019limited}
Brandon Amos, Vladlen Koltun, and J.~Zico Kolter.
\newblock The limited multi-label projection layer.
\newblock \emph{CoRR}, abs/1906.08707, 2019.
\newblock URL \url{http://arxiv.org/abs/1906.08707}.

\bibitem[Blondel et~al.(2020)Blondel, Teboul, Berthet, and
  Djolonga]{blondel2020fast}
Mathieu Blondel, Olivier Teboul, Quentin Berthet, and Josip Djolonga.
\newblock Fast differentiable sorting and ranking.
\newblock In \emph{Proceedings of the 37th International Conference on Machine
  Learning, {ICML} 2020, 13-18 July 2020, Virtual Event}, volume 119 of
  \emph{Proceedings of Machine Learning Research}, pages 950--959. {PMLR},
  2020.
\newblock URL \url{http://proceedings.mlr.press/v119/blondel20a.html}.

\bibitem[Wilder et~al.(2019)Wilder, Dilkina, and Tambe]{Wilder2019:melding}
Bryan Wilder, Bistra Dilkina, and Milind Tambe.
\newblock Melding the data-decisions pipeline: Decision-focused learning for
  combinatorial optimization.
\newblock In \emph{The Thirty-Third {AAAI} Conference on Artificial
  Intelligence, {AAAI} 2019, The Thirty-First Innovative Applications of
  Artificial Intelligence Conference, {IAAI} 2019, The Ninth {AAAI} Symposium
  on Educational Advances in Artificial Intelligence, {EAAI} 2019, Honolulu,
  Hawaii, USA, January 27 - February 1, 2019}, pages 1658--1665. {AAAI} Press,
  2019.
\newblock \doi{10.1609/aaai.v33i01.33011658}.
\newblock URL \url{https://doi.org/10.1609/aaai.v33i01.33011658}.

\bibitem[Balcan et~al.(2018)Balcan, Dick, Sandholm, and
  Vitercik]{balcan2018learning}
Maria{-}Florina Balcan, Travis Dick, Tuomas Sandholm, and Ellen Vitercik.
\newblock Learning to branch.
\newblock In Jennifer~G. Dy and Andreas Krause, editors, \emph{Proceedings of
  the 35th International Conference on Machine Learning, {ICML} 2018,
  Stockholmsm{\"{a}}ssan, Stockholm, Sweden, July 10-15, 2018}, volume~80 of
  \emph{Proceedings of Machine Learning Research}, pages 353--362. {PMLR},
  2018.
\newblock URL \url{http://proceedings.mlr.press/v80/balcan18a.html}.

\bibitem[Khalil et~al.(2016)Khalil, Bodic, Song, Nemhauser, and
  Dilkina]{khalil2016learning}
Elias~Boutros Khalil, Pierre~Le Bodic, Le~Song, George~L. Nemhauser, and Bistra
  Dilkina.
\newblock Learning to branch in mixed integer programming.
\newblock In Dale Schuurmans and Michael~P. Wellman, editors, \emph{Proceedings
  of the Thirtieth {AAAI} Conference on Artificial Intelligence, February
  12-17, 2016, Phoenix, Arizona, {USA}}, pages 724--731. {AAAI} Press, 2016.
\newblock URL
  \url{http://www.aaai.org/ocs/index.php/AAAI/AAAI16/paper/view/12514}.

\bibitem[Alvarez et~al.(2017)Alvarez, Louveaux, and
  Wehenkel]{alvarez2017machine}
Alejandro~Marcos Alvarez, Quentin Louveaux, and Louis Wehenkel.
\newblock A machine learning-based approximation of strong branching.
\newblock \emph{{INFORMS} J. Comput.}, 29\penalty0 (1):\penalty0 185--195,
  2017.
\newblock URL \url{https://doi.org/10.1287/ijoc.2016.0723}.

\bibitem[Balunovic et~al.(2018)Balunovic, Bielik, and
  Vechev]{balunovic2018learning}
Mislav Balunovic, Pavol Bielik, and Martin~T. Vechev.
\newblock Learning to solve {SMT} formulas.
\newblock In Samy Bengio, Hanna~M. Wallach, Hugo Larochelle, Kristen Grauman,
  Nicol{\`{o}} Cesa{-}Bianchi, and Roman Garnett, editors, \emph{Advances in
  Neural Information Processing Systems 31: Annual Conference on Neural
  Information Processing Systems 2018, NeurIPS 2018, December 3-8, 2018,
  Montr{\'{e}}al, Canada}, pages 10338--10349, 2018.
\newblock URL
  \url{https://proceedings.neurips.cc/paper/2018/hash/68331ff0427b551b68e911eebe35233b-Abstract.html}.

\bibitem[Mandi and Guns(2020)]{MandiGuns2020:InteriorPointPO}
Jayanta Mandi and Tias Guns.
\newblock Interior point solving for lp-based prediction+optimisation.
\newblock In H.~Larochelle, M.~Ranzato, R.~Hadsell, M.F. Balcan, and H.~Lin,
  editors, \emph{Advances in Neural Information Processing Systems}, volume~33,
  pages 7272--7282. Curran Associates, Inc., 2020.
\newblock URL
  \url{https://proceedings.neurips.cc/paper/2020/file/51311013e51adebc3c34d2cc591fefee-Paper.pdf}.

\bibitem[Tan et~al.(2020)Tan, Terekhov, and Delong]{tan2020learning}
Yingcong Tan, Daria Terekhov, and Andrew Delong.
\newblock Learning linear programs from optimal decisions.
\newblock In H.~Larochelle, M.~Ranzato, R.~Hadsell, M.~F. Balcan, and H.~Lin,
  editors, \emph{Advances in Neural Information Processing Systems (NeurIPS)},
  volume~33, pages 19738--19749, 2020.
\newblock URL
  \url{https://proceedings.neurips.cc/paper/2020/file/e44e875c12109e4fa3716c05008048b2-Paper.pdf}.

\bibitem[Bello et~al.(2017)Bello, Pham, Le, Norouzi, and
  Bengio]{bello2016neural}
Irwan Bello, Hieu Pham, Quoc~V. Le, Mohammad Norouzi, and Samy Bengio.
\newblock Neural combinatorial optimization with reinforcement learning.
\newblock In \emph{5th International Conference on Learning Representations,
  {ICLR} 2017, Toulon, France, April 24-26, 2017, Workshop Track Proceedings}.
  OpenReview.net, 2017.
\newblock URL \url{https://openreview.net/forum?id=Bk9mxlSFx}.

\bibitem[Nazari et~al.(2018)Nazari, Oroojlooy, Snyder, and
  Tak{\'{a}}c]{nazari2018reinforcement}
MohammadReza Nazari, Afshin Oroojlooy, Lawrence~V. Snyder, and Martin
  Tak{\'{a}}c.
\newblock Reinforcement learning for solving the vehicle routing problem.
\newblock In Samy Bengio, Hanna~M. Wallach, Hugo Larochelle, Kristen Grauman,
  Nicol{\`{o}} Cesa{-}Bianchi, and Roman Garnett, editors, \emph{Advances in
  Neural Information Processing Systems 31: Annual Conference on Neural
  Information Processing Systems 2018, NeurIPS 2018, December 3-8, 2018,
  Montr{\'{e}}al, Canada}, pages 9861--9871, 2018.
\newblock URL
  \url{https://proceedings.neurips.cc/paper/2018/hash/9fb4651c05b2ed70fba5afe0b039a550-Abstract.html}.

\bibitem[Zhang and Dietterich(2000)]{zhang2000solving}
Wei Zhang and Thomas~G Dietterich.
\newblock Solving combinatorial optimization tasks by reinforcement learning: A
  general methodology applied to resource-constrained scheduling.
\newblock \emph{Journal of Artificial Intelligence Research}, 1\penalty0
  (1):\penalty0 1--38, 2000.

\bibitem[Velickovic et~al.(2018)Velickovic, Cucurull, Casanova, Romero,
  Li{\`{o}}, and Bengio]{velivckovic2017graph}
Petar Velickovic, Guillem Cucurull, Arantxa Casanova, Adriana Romero, Pietro
  Li{\`{o}}, and Yoshua Bengio.
\newblock Graph attention networks.
\newblock In \emph{6th International Conference on Learning Representations,
  {ICLR} 2018, Vancouver, BC, Canada, April 30 - May 3, 2018, Conference Track
  Proceedings}. OpenReview.net, 2018.
\newblock URL \url{https://openreview.net/forum?id=rJXMpikCZ}.

\bibitem[Velickovic et~al.(2020)Velickovic, Ying, Padovano, Hadsell, and
  Blundell]{velivckovic2019neural}
Petar Velickovic, Rex Ying, Matilde Padovano, Raia Hadsell, and Charles
  Blundell.
\newblock Neural execution of graph algorithms.
\newblock In \emph{8th International Conference on Learning Representations,
  {ICLR} 2020, Addis Ababa, Ethiopia, April 26-30, 2020}. OpenReview.net, 2020.
\newblock URL \url{https://openreview.net/forum?id=SkgKO0EtvS}.

\bibitem[Deac et~al.(2020)Deac, Velickovic, Milinkovic, Bacon, Tang, and
  Nikolic]{deac2020xlvin}
Andreea Deac, Petar Velickovic, Ognjen Milinkovic, Pierre{-}Luc Bacon, Jian
  Tang, and Mladen Nikolic.
\newblock {XLVIN:} executed latent value iteration nets.
\newblock \emph{CoRR}, abs/2010.13146, 2020.
\newblock URL \url{https://arxiv.org/abs/2010.13146}.

\bibitem[Li et~al.(2022)Li, Wu, He, Fan, and Pedrycz]{Li2021:OverviewLBO}
Bingjie Li, Guohua Wu, Yongming He, Mingfeng Fan, and Witold Pedrycz.
\newblock An overview and experimental study of learning-based optimization
  algorithms for the vehicle routing problem.
\newblock \emph{{IEEE} {CAA} J. Autom. Sinica}, 9\penalty0 (7):\penalty0
  1115--1138, 2022.
\newblock \doi{10.1109/JAS.2022.105677}.
\newblock URL \url{https://doi.org/10.1109/JAS.2022.105677}.

\bibitem[Ellis et~al.(2018)Ellis, Ritchie, Solar{-}Lezama, and
  Tenenbaum]{ellis2018learning}
Kevin Ellis, Daniel Ritchie, Armando Solar{-}Lezama, and Josh Tenenbaum.
\newblock Learning to infer graphics programs from hand-drawn images.
\newblock In Samy Bengio, Hanna~M. Wallach, Hugo Larochelle, Kristen Grauman,
  Nicol{\`{o}} Cesa{-}Bianchi, and Roman Garnett, editors, \emph{Advances in
  Neural Information Processing Systems 31: Annual Conference on Neural
  Information Processing Systems 2018, NeurIPS 2018, December 3-8, 2018,
  Montr{\'{e}}al, Canada}, pages 6062--6071, 2018.
\newblock URL
  \url{https://proceedings.neurips.cc/paper/2018/hash/6788076842014c83cedadbe6b0ba0314-Abstract.html}.

\bibitem[Inala et~al.(2020)Inala, Bastani, Tavares, and
  Solar{-}Lezama]{inala2019synthesizing}
Jeevana~Priya Inala, Osbert Bastani, Zenna Tavares, and Armando Solar{-}Lezama.
\newblock Synthesizing programmatic policies that inductively generalize.
\newblock In \emph{8th International Conference on Learning Representations,
  {ICLR} 2020, Addis Ababa, Ethiopia, April 26-30, 2020}. OpenReview.net, 2020.
\newblock URL \url{https://openreview.net/forum?id=S1l8oANFDH}.

\bibitem[Bengio et~al.(2013)Bengio, L{\'{e}}onard, and
  Courville]{bengio2013estimating}
Yoshua Bengio, Nicholas L{\'{e}}onard, and Aaron~C. Courville.
\newblock Estimating or propagating gradients through stochastic neurons for
  conditional computation.
\newblock \emph{CoRR}, abs/1308.3432, 2013.
\newblock URL \url{http://arxiv.org/abs/1308.3432}.

\bibitem[Hinton(2012)]{hinton2012coursera}
Geoffrey Hinton.
\newblock Neural networks for machine learning, coursera.
\newblock Coursera, video lectures, 2012.

\bibitem[Rosenblatt(1958)]{rosenblatt1958perceptron}
Frank Rosenblatt.
\newblock {The perceptron: A probabilistic model for information storage and
  organization in the brain.}
\newblock \emph{Psychological Review}, 65\penalty0 (6):\penalty0 386--408,
  1958.
\newblock ISSN 0033-295X.
\newblock \doi{10.1037/h0042519}.
\newblock URL \url{http://dx.doi.org/10.1037/h0042519}.

\bibitem[Jang et~al.(2017)Jang, Gu, and Poole]{jang2016categorical}
Eric Jang, Shixiang Gu, and Ben Poole.
\newblock Categorical reparameterization with gumbel-softmax.
\newblock In \emph{5th International Conference on Learning Representations,
  {ICLR} 2017, Toulon, France, April 24-26, 2017, Conference Track
  Proceedings}. OpenReview.net, 2017.
\newblock URL \url{https://openreview.net/forum?id=rkE3y85ee}.

\bibitem[Yin et~al.(2019)Yin, Lyu, Zhang, Osher, Qi, and
  Xin]{yin2019understanding}
Penghang Yin, Jiancheng Lyu, Shuai Zhang, Stanley~J. Osher, Yingyong Qi, and
  Jack Xin.
\newblock Understanding straight-through estimator in training activation
  quantized neural nets.
\newblock In \emph{7th International Conference on Learning Representations,
  {ICLR} 2019, New Orleans, LA, USA, May 6-9, 2019}. OpenReview.net, 2019.
\newblock URL \url{https://openreview.net/forum?id=Skh4jRcKQ}.

\bibitem[Paulus et~al.(2020)Paulus, Choi, Tarlow, Krause, and
  Maddison]{paulus2020gradient}
Max~B. Paulus, Dami Choi, Daniel Tarlow, Andreas Krause, and Chris~J. Maddison.
\newblock Gradient estimation with stochastic softmax tricks.
\newblock In Hugo Larochelle, Marc'Aurelio Ranzato, Raia Hadsell,
  Maria{-}Florina Balcan, and Hsuan{-}Tien Lin, editors, \emph{Advances in
  Neural Information Processing Systems 33: Annual Conference on Neural
  Information Processing Systems 2020, NeurIPS 2020, December 6-12, 2020,
  virtual}, 2020.
\newblock URL
  \url{https://proceedings.neurips.cc/paper/2020/hash/3df80af53dce8435cf9ad6c3e7a403fd-Abstract.html}.

\bibitem[Rolfe(2017)]{rolfe2016discrete}
Jason~Tyler Rolfe.
\newblock Discrete variational autoencoders.
\newblock In \emph{5th International Conference on Learning Representations,
  {ICLR} 2017, Toulon, France, April 24-26, 2017, Conference Track
  Proceedings}. OpenReview.net, 2017.
\newblock URL \url{https://openreview.net/forum?id=ryMxXPFex}.

\bibitem[McAuley et~al.(2012)McAuley, Leskovec, and
  Jurafsky]{mcauley2012learning}
Julian McAuley, Jure Leskovec, and Dan Jurafsky.
\newblock Learning attitudes and attributes from multi-aspect reviews.
\newblock In \emph{2012 IEEE 12th International Conference on Data Mining},
  pages 1020--1025. IEEE, 2012.

\bibitem[Chen et~al.(2018)Chen, Song, Wainwright, and Jordan]{chen2018learning}
Jianbo Chen, Le~Song, Martin~J. Wainwright, and Michael~I. Jordan.
\newblock Learning to explain: An information-theoretic perspective on model
  interpretation.
\newblock In Jennifer~G. Dy and Andreas Krause, editors, \emph{Proceedings of
  the 35th International Conference on Machine Learning, {ICML} 2018,
  Stockholmsm{\"{a}}ssan, Stockholm, Sweden, July 10-15, 2018}, volume~80 of
  \emph{Proceedings of Machine Learning Research}, pages 882--891. {PMLR},
  2018.
\newblock URL \url{http://proceedings.mlr.press/v80/chen18j.html}.

\bibitem[Sahoo et~al.(2021)Sahoo, Venugopalan, Li, Singh, and
  Riley]{sahoo2021scaling}
Subham~Sekhar Sahoo, Subhashini Venugopalan, Li~Li, Rishabh Singh, and Patrick
  Riley.
\newblock Scaling symbolic methods using gradients for neural model
  explanation.
\newblock In \emph{9th International Conference on Learning Representations,
  {ICLR} 2021, Virtual Event, Austria, May 3-7, 2021}. OpenReview.net, 2021.
\newblock URL \url{https://openreview.net/forum?id=V5j-jdoDDP}.

\bibitem[Xie and Ermon(2019)]{xie2019reparameterizable}
Sang~Michael Xie and Stefano Ermon.
\newblock Reparameterizable subset sampling via continuous relaxations.
\newblock In Sarit Kraus, editor, \emph{Proceedings of the Twenty-Eighth
  International Joint Conference on Artificial Intelligence, {IJCAI} 2019,
  Macao, China, August 10-16, 2019}, pages 3919--3925. ijcai.org, 2019.
\newblock \doi{10.24963/ijcai.2019/544}.
\newblock URL \url{https://doi.org/10.24963/ijcai.2019/544}.

\bibitem[Everingham et~al.(2010)Everingham, Gool, Williams, Winn, and
  Zisserman]{everingham2010pascal}
Mark Everingham, Luc~Van Gool, Christopher K.~I. Williams, John~M. Winn, and
  Andrew Zisserman.
\newblock The pascal visual object classes {(VOC)} challenge.
\newblock \emph{Int. J. Comput. Vis.}, 88\penalty0 (2):\penalty0 303--338,
  2010.
\newblock \doi{10.1007/s11263-009-0275-4}.
\newblock URL \url{https://doi.org/10.1007/s11263-009-0275-4}.

\bibitem[Min et~al.(2019)Min, Lee, Ponce, and Cho]{min2019spair71k}
Juhong Min, Jongmin Lee, Jean Ponce, and Minsu Cho.
\newblock Spair-71k: {A} large-scale benchmark for semantic correspondence.
\newblock \emph{CoRR}, abs/1908.10543, 2019.
\newblock URL \url{http://arxiv.org/abs/1908.10543}.

\bibitem[Welinder et~al.(2010)Welinder, Branson, Mita, Wah, Schroff, Belongie,
  and Perona]{cub200}
Peter Welinder, Steve Branson, Takeshi Mita, Catherine Wah, Florian Schroff,
  Serge Belongie, and Pietro Perona.
\newblock {Caltech-UCSD Birds 200}.
\newblock Technical Report CNS-TR-2010-001, California Institute of Technology,
  2010.

\bibitem[LeCun et~al.(2010)LeCun, Cortes, and Burges]{lecun2010mnist}
Yann LeCun, Corinna Cortes, and CJ~Burges.
\newblock Mnist handwritten digit database. att labs, 2010.

\bibitem[Hubara et~al.(2016)Hubara, Courbariaux, Soudry, El{-}Yaniv, and
  Bengio]{hubara2016binarized}
Itay Hubara, Matthieu Courbariaux, Daniel Soudry, Ran El{-}Yaniv, and Yoshua
  Bengio.
\newblock Binarized neural networks.
\newblock In Daniel~D. Lee, Masashi Sugiyama, Ulrike von Luxburg, Isabelle
  Guyon, and Roman Garnett, editors, \emph{Advances in Neural Information
  Processing Systems 29: Annual Conference on Neural Information Processing
  Systems 2016, December 5-10, 2016, Barcelona, Spain}, pages 4107--4115, 2016.
\newblock URL
  \url{https://proceedings.neurips.cc/paper/2016/hash/d8330f857a17c53d217014ee776bfd50-Abstract.html}.

\bibitem[Maddison et~al.(2017)Maddison, Mnih, and Teh]{maddison2017concrete}
Chris~J. Maddison, Andriy Mnih, and Yee~Whye Teh.
\newblock The concrete distribution: {A} continuous relaxation of discrete
  random variables.
\newblock In \emph{5th International Conference on Learning Representations,
  {ICLR} 2017, Toulon, France, April 24-26, 2017, Conference Track
  Proceedings}. OpenReview.net, 2017.
\newblock URL \url{https://openreview.net/forum?id=S1jE5L5gl}.

\bibitem[Swoboda et~al.(2017)Swoboda, Rother, Alhaija, Kainm{\"{u}}ller, and
  Savchynskyy]{swoboda2017study}
Paul Swoboda, Carsten Rother, Hassan~Abu Alhaija, Dagmar Kainm{\"{u}}ller, and
  Bogdan Savchynskyy.
\newblock A study of lagrangean decompositions and dual ascent solvers for
  graph matching.
\newblock In \emph{2017 {IEEE} Conference on Computer Vision and Pattern
  Recognition, {CVPR} 2017, Honolulu, HI, USA, July 21-26, 2017}, pages
  7062--7071. {IEEE} Computer Society, 2017.
\newblock \doi{10.1109/CVPR.2017.747}.
\newblock URL \url{https://doi.org/10.1109/CVPR.2017.747}.

\bibitem[He et~al.(2016)He, Zhang, Ren, and Sun]{he2016deep}
Kaiming He, Xiangyu Zhang, Shaoqing Ren, and Jian Sun.
\newblock Deep residual learning for image recognition.
\newblock In \emph{2016 {IEEE} Conference on Computer Vision and Pattern
  Recognition, {CVPR} 2016, Las Vegas, NV, USA, June 27-30, 2016}, pages
  770--778. {IEEE} Computer Society, 2016.
\newblock \doi{10.1109/CVPR.2016.90}.
\newblock URL \url{https://doi.org/10.1109/CVPR.2016.90}.

\bibitem[{Gurobi Optimization, LLC}(2022)]{gurobi}
{Gurobi Optimization, LLC}.
\newblock {Gurobi Optimizer Reference Manual}, 2022.
\newblock URL \url{https://www.gurobi.com}.

\bibitem[Blondel(2019)]{blondel2019structured}
Mathieu Blondel.
\newblock Structured prediction with projection oracles.
\newblock In Hanna~M. Wallach, Hugo Larochelle, Alina Beygelzimer, Florence
  d'Alch{\'{e}}{-}Buc, Emily~B. Fox, and Roman Garnett, editors, \emph{Advances
  in Neural Information Processing Systems 32: Annual Conference on Neural
  Information Processing Systems 2019, NeurIPS 2019, December 8-14, 2019,
  Vancouver, BC, Canada}, pages 12145--12156, 2019.
\newblock URL
  \url{https://proceedings.neurips.cc/paper/2019/hash/7990ec44fcf3d7a0e5a2add28362213c-Abstract.html}.

\end{thebibliography}
\end{document}